\theoremstyle{plain}
\newtheorem{theorem}{Theorem}
\newtheorem{lemma}[theorem]{Lemma}
\newtheorem{corollary}[theorem]{Corollary}
\newtheorem{problem}{Problem}
\newtheorem{remark}[theorem]{Remark}
\newtheorem*{proof}{Proof}
\DeclareMathOperator*{\argmin}{arg\,min}
\newcommand{\field}[1]{\mathbb{#1}}
\newcommand{\R}{\field{R}}
\DeclareMathOperator{\Diff}{Diff}
\newcommand{\ind}{{k,l;i,j}}
\newcommand{\vC}{\mathbf{C}}
\newcommand{\vtheta}{\boldsymbol{\theta}}
\date{\today}
\author{Martin Bauer\thanks{Faculty of Mathematics, University of
    Vienna.\\ E-mail:
    \href{mailto:bauer.martin@univie.ac.at}{bauer.martin@univie.ac.at}
    \\ MB was supported by FWF project P24625.}, Markus
  Eslitzbichler\thanks{Department of Mathematical Sciences, NTNU
    Trondheim.\\ E-mail:
    \href{mailto:markus.eslitzbichler@math.ntnu.no}{markus.eslitzbichler@math.ntnu.no}},
  Markus Grasmair\thanks{Department of Mathematical Sciences, NTNU
    Trondheim.\\ E-mail: \href{mailto:markus.grasmair@math.ntnu.no}{markus.grasmair@math.ntnu.no}}.}
\title{Landmark-Guided Elastic Shape Analysis of Human Character Motions}
\begin{document}

\maketitle
\begin{abstract}
{\bf Abstract.} 
Motions of virtual characters in movies or video games are typically
generated by recording actors using motion capturing methods.
Animations generated this way often need postprocessing, such as improving the periodicity of cyclic animations or generating entirely new motions by interpolation of existing ones.
Furthermore, search and classification of recorded motions becomes more and more important as the amount of recorded motion data grows.

In this paper, we will apply methods from shape analysis to the processing of animations.
More precisely, we will use the by now classical elastic metric model
used in shape matching, and extend it by incorporating additional
inexact feature point information, which leads to an improved temporal
alignment of different animations.

{\bf Keywords.} Curve matching, feature point information, skeletal animation, shape analysis, square root velocity transform.

{\bf MSC Subject Classification.}
65D18;
58D10;
49Q10.
\end{abstract}

\section{Introduction}

\paragraph{Animations.}
Virtual characters in movie and TV special effects or video games are most commonly animated using \emph{skeletal animation}, where a character's motion is described in terms of joint-angles in an approximation of a human skeleton.
\emph{Motion capturing} is a typical way to generate such animations, whereby an actor or stuntman performs the requested motion while being recorded from multiple angles.
From this, the underlying skeletal pose can then be reconstructed.
While motion capturing can produce very life-like animations, it has a number of drawbacks, some of which can be addressed algorithmically \cite{pejsa_state_2010, bruderlin_motion_1995}.

In \cite{eslitzbichler2014}, methods from shape analysis were applied to a number of problems in computer animation: periodicity of animations, interpolation between animations and motion recognition.
A central component in this is the modelling of curve reparametrizations as elements of the diffeomorphism group on the circle or the unit interval, $\operatorname{Diff}(S^1)$ and $\operatorname{Diff}([0,1])$ respectively.
For animations, such reparametrizations can be used to align them in time, similar to the concept of \emph{timewarp curves} in \cite{kovar_flexible_2003, kovar_automated_2004}.
Note that the skeletal animation approach is very different in concept and applications from silhouette-based representations of motions, as for example in \cite{abdelkader_silhouette-based_2011}.

In this paper we will demonstrate how feature point information can be
incorporated into shape matching techniques, and how this can be
applied in the context of computer animations in order to improve the
temporal alignment of related actions.

\paragraph{Shape matching.}
The field of shape analysis concerns itself with the study and classification of similarities and dissimilarities within certain classes of shapes. 
In order to achieve this objective, a variety of different methods has
been developed, each tailored to the actual class of shapes under consideration.
A particularly important example for a shape space, which is also the
main focus of this article, is the space of unparametrized curves,\footnote{Closed curves can be used, for example, to represent outlines of objects in recognition applications.} and,
in recent years, Riemannian methods for shape spaces of curves have been deeply explored both theoretically \cite{Younes1998,Mio2004,Mumford2006,Michor2007,Fuchs2009,Shah2008,Mennucci2008,Shah2013,Bruveris2014,Bauer2014_preprint} as well as from an application oriented point of view 
\cite{Mio2007,Samir2012,LKS2014,SKKS2014}. See \cite{Bauer2014a} for an overview of these topics.

Although the main contribution of this article is not limited to
a specific Riemannian metric on the shape space of unparametrized curves,
we will focus for simplicity on a particular one that is related to the so called \emph{Square Root Velocity} (SRV) framework \cite{Jermyn2011,Bauer2014b}.
This metric is given as the pullback of the $L^2$--metric via a transformation called the SRVT. This allows for the development of extremely efficient numerical methods and, as a consequence,
it has been successfully used in a series of applications \cite{XKS2014,eslitzbichler2014}. See also \cite{Michor2008a,Bauer2014c} for other reparametrization invariant metrics that can be represented in a similar way.

Our main addition to this standard setting will be the incorporation
of point correspondences between two curves one wants to match. In the
particular application of animation processing, these correspondences
will describe similar poses at different points in time in the two animations. We assume here
that the point correspondences are manually entered pieces of
information that are possibly error-prone.
Thus we are interested in an exact matching of the unparametrized curves, but
only an inexact matching of the point correspondences. 
Our approach differs from the one presented in \cite{wei_liu_riemannian_2011}, where feature information is included in the form of  auxiliary functions that are combined with the geometric curve to form higher dimensional curves that can then be matched by the usual methods.
Instead, we augment the matching energy functional with an extra feature matching term, leaving the elastic matching terms unchanged.
In comparison to \cite{kurtek_landmark-guided_2013}, which focused on matching three dimensional surfaces, we allow inexact matching of landmarks in our model.
We will discuss in this article both the theoretical framework of
shape matching with feature points and computational aspects.

\paragraph{Overview}
In Sect.~\ref{Sect:GeneralFramework} we give an overview of the general framework for shape
matching with additional feature point information 
and will then apply this framework to one particular choice of metric:
the metric that is induced by the SRV transformation. We then present
the basics of two algorithms that can be used to determine the optimal
parametrization to match two curves:
In Sect.~\ref{gradient} we derive all the formulas necessary to
find the parametrization using a gradient descent approach. In
addition, in Sect.~\ref{dynamic}, we discuss the application of a
dynamic programming based algorithm.
In Sect.~\ref{numerics} we apply the previously presented framework to
process human motion data.
Moreover, we show some examples of an application to
two-dimensional curves, where the workings of the method can be better
visualized.

\section{The general framework}\label{Sect:GeneralFramework}
\paragraph{Problem formulation.}
We will start this article by formulating our main problem:
\begin{problem}\label{problemformulation}
Given two unparametrized curves $[c_0]$ and $[c_1]$ and a number of $n$ point correspondences between the curves -- i.e., points $C^i_0$ and $C_1^i$ that should be matched onto each other --
we want to find an optimal deformation between the two curves, that also respects the alignment of the feature points. 
\end{problem}

In order to achieve this goal, we will first choose representatives of the shapes $[c_0]$ and $[c_1]$, i.e., parametrized curves $c_0$ and $c_1$. Each representative $c_j$
determines parameter values $\theta_j^i$ that correspond to the feature points $C_j^i$ in the sense that
$$c_j(\theta_j^i)=C^i_j.$$
Thus we can represent any shape $[c_0]$ with additional feature point information as a tuple $(c_0,\vtheta_0)$, with $c_0\in \operatorname{Imm}(S^1,\mathbb R^d)$ and $\vtheta_0 = (\theta_0^i)_i\in (S^1)^n$.
One way to tackle Problem \ref{problemformulation} is then to construct a similarity measure on the product space $\operatorname{Imm}(S^1,\mathbb R^d)\times (S^1)^n$ that has  certain 
invariance properties with respect to the action of the diffeomorphism group.  

To do this mathematically rigorously we will need some results and definitions from infinite dimensional Riemannian geometry first.

\subsection{The manifold of parametrized curves}
In this article we will consider the space of regular curves from a parameter space $M$ into some -- possibly high dimensional -- $\R^d$:
\begin{equation}
\operatorname{Imm}(M,\R^d):=\left\{c\in C^{\infty}(M,\R^d): |c'|>0 \right\}.
\end{equation}
Here $M$ stands for the circle $S^1$ in the case of closed curves and for the interval $[0,2\pi]$ for open curves.

The space $\operatorname{Imm}(M,\R^d)$ is a smooth infinite dimensional Fr\'echet manifold 
with tangent space  $T_c\operatorname{Imm}(M,\R^d)$, the set of all vector fields 
along the curve $c$. Using the trivialization of $\R^d$ we can identify the tangent space with
\begin{equation}
T_c\operatorname{Imm}(M,\R^d):=\left\{h\in C^{\infty}(M,\R^d)\right\}.
\end{equation}

On the manifold $\operatorname{Imm}(M,\R^d)$ we  consider  reparametrization invariant metrics $G_c$, i.e.,
Riemannian metrics that satisfy
\begin{equation}
G_c(h,k)=G_{c\circ\varphi}(h\circ\varphi,k\circ\varphi)\quad\text{for all}\quad \varphi \in \operatorname{Diff}(M).
\end{equation}
Here, $\operatorname{Diff}(M)$ denotes the group of orientation preserving diffeomorphisms of $M$, which consists of all reparametrizations of
the curves under consideration.

The most prominent example of such a metric is the elastic metric $G^{a,b}$ that is defined  by
\begin{equation}\label{elasticmetric}
G^{a,b}_c(h,k)=\int_{M}a^2 |(D_s h)^\bot| |(D_s k)^\bot|+b^2|(D_s h)^\top| |(D_s k)^\top| ds.
\end{equation}
Here $a^2$, $b^2$ are positive constants, $D_s=\frac{1}{|c'|}\partial_{\theta}$ denotes the arc length derivative, $ds=|c'|d\theta$ is arc length integration, 
$v=D_s c$ is the unit length tangent vector, $|(D_s h)^\top|=\langle D_sh, v \rangle$ 
denotes the tangential component of $D_s h$ and $|(D_s h)^\bot| = D_sh - |(D_s h)^\top|v $ the normal component. 
Note that constant vector fields $h$ are in the kernel of $G^{a,b}$, thus \eqref{elasticmetric} defines only a metric on the manifold of immersions modulo translations. 

Other metrics that have been introduced include higher order Sobolev metrics, i.e., metrics of the form
\begin{equation}\label{higherSobmetric}
G^{l}_c(h,k)=\int_{M}\left(\sum_{j} a_j \langle D^j_s h, D^j_s k \rangle \right) ds,
\end{equation}
with coefficients $a_j$ possibly depending on the foot point $c$. Depending on the order of the metric $G$, local and global well-posedness of the geodesic equation have been shown and the metric completions of the corresponding spaces 
have been studied \cite{Bruveris2014}.

In the experimental part of this article we will focus on one particular member of this family, namely the elastic metric that corresponds to the parameters $a=1$, $b=\frac12$. 
This 
metric has the advantage that it has a very helpful representation as a pullback of the flat $L^2$--metric. To see this we introduce the so-called \emph{square root velocity transform} (or short \emph{SRVT}):
\begin{align}
R:\begin{cases}
  \operatorname{Imm}(M,\mathbb R^d)&\mapsto C^{\infty}(M,\mathbb R^d\setminus\{0\}),\\
  c&\rightarrow \dfrac{c'}{\sqrt{|c'|}}.
  \end{cases}
\end{align}
The SRVT, when regarded for curves modulo translations,
has an inverse, which is given by:
\begin{align}
R^{-1}:\begin{cases}
   C^{\infty}(M,\mathbb R^d\setminus\{0\})&\mapsto \operatorname{Imm}(M,\mathbb R^d), \\
  q&{\displaystyle\rightarrow \int_0^\tau |q|q \,d\theta}.
  \end{cases}
\end{align}

On $C^{\infty}(M,\mathbb R^d)$ we can consider the flat $L^2$--metric. In \cite{Jermyn2011} it has been shown, that the pullback via $R$ of the 
$L^2$--metric is exactly the elastic metric \eqref{elasticmetric}. 
The situation for open curves is particularly easy, as one has then explicit formulas for geodesics and geodesic distance:
\begin{theorem}\label{SRVT-open}
The image of the $R$-map of the manifold of open curves is an open subset of $C^{\infty}([0,2\pi],\mathbb R^d)$:
\begin{equation}
\operatorname{Im}(R) = \left\{q\in  C^{\infty}([0,2\pi],\mathbb R^d): |q|\neq 0 \right\} .
\end{equation}
Two open curves $c_0$, $c_1$ can be connected by a geodesic if and only
if there exist no $\theta \in [0,2\pi]$ and no $\lambda > 0$ such that $c_0'(\theta) = -\lambda c_1'(\theta)$.
In this case, the unique geodesic connecting them is given by
\begin{equation}
c(t,\theta) =  R^{-1}\left((1-t)R(c_0)+tR(c_1)\right).
\end{equation}
Moreover, the induced geodesic distance on $\operatorname{Imm}([0,2\pi],\R^d)$ is given by
\begin{equation}
d(c_0,c_1) =  \sqrt{\int_0^{2\pi} \|R(c_0)-R(c_1)\|^2_{\mathbb R^d}d\theta}\,.
\end{equation}
\end{theorem}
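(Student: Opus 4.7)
The plan is to split the proof into three parts: characterizing $\operatorname{Im}(R)$, identifying when the candidate geodesic stays inside that image, and then reading off the geodesic and distance formulas from the flat $L^2$ geometry.

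First, I would verify the description of $\operatorname{Im}(R)$ by checking that $R$ and the stated $R^{-1}$ are mutually inverse on the relevant sets. If $c$ is an immersion, then $|c'|>0$, so $q:=c'/\sqrt{|c'|}$ is smooth and satisfies $|q|^2=|c'|>0$; hence $\operatorname{Im}(R)\subset\{q\in C^\infty([0,2\pi],\mathbb R^d):|q|\neq 0\}$. Conversely, given such a $q$, I would define $c(\theta)=\int_0^\theta |q|q\,d\theta'$ and compute $c'=|q|q$, from which $R(c)=|q|q/|q|=q$. This identifies the quotient of $\operatorname{Imm}([0,2\pi],\mathbb R^d)$ by translations, via $R$, with the open subset $U:=\{q\in C^\infty([0,2\pi],\mathbb R^d):|q|\neq 0\}$.

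Second, I would exploit the cited fact that the elastic metric with $(a,b)=(1,\tfrac12)$ is the pullback under $R$ of the flat $L^2$ metric on $C^\infty([0,2\pi],\mathbb R^d)$. This turns $R$ into an isometry between $\operatorname{Imm}([0,2\pi],\mathbb R^d)/\text{translations}$ and $U$ with the restricted $L^2$ metric. Since $L^2$ is flat, the only candidate geodesic from $R(c_0)$ to $R(c_1)$ is the straight segment
\begin{equation*}
q(t,\theta)=(1-t)R(c_0)(\theta)+t R(c_1)(\theta),\qquad t\in[0,1],
\end{equation*}
and it is an admissible geodesic in $U$ if and only if $q(t,\theta)\neq 0$ for every $(t,\theta)$. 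The vanishing condition $q(t,\theta)=0$ for some $t\in(0,1)$ is equivalent to $R(c_0)(\theta)=-\mu R(c_1)(\theta)$ with $\mu=t/(1-t)>0$. Using the identity $c_j'=|R(c_j)|R(c_j)$ from the first step, a short calculation shows this is in turn equivalent to $c_0'(\theta)=-\mu^2 c_1'(\theta)$, i.e. $c_0'(\theta)=-\lambda c_1'(\theta)$ for some $\lambda>0$. Negating this yields precisely the connectivity condition in the statement.

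Finally, when connectivity holds, the unique geodesic between $R(c_0)$ and $R(c_1)$ in $U$ is the straight segment $q(t,\cdot)$, and applying $R^{-1}$ gives the formula $c(t,\theta)=R^{-1}((1-t)R(c_0)+tR(c_1))$; uniqueness follows from uniqueness of straight-line geodesics in the flat $L^2$ geometry. The geodesic distance coincides with the length of that segment, which is exactly the $L^2$ norm $\|R(c_0)-R(c_1)\|_{L^2}$ appearing in the statement. The only delicate point is the translation between the antiparallelism condition on the SRVTs and on the velocities $c_0',c_1'$; everything else reduces to the flat-space description of geodesics and the isometric identification provided by the SRVT.
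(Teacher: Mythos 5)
The paper itself does not prove Theorem~\ref{SRVT-open}; it simply refers to \cite{Bauer2014b}. Your argument follows what is essentially the standard (and the cited) route: $R$ is a bijection onto the set of nowhere-vanishing smooth maps (your computation $|q|^2=|c'|$ and $c'=|q|q$ is correct, and openness of that set in the $C^\infty$ topology is immediate), the pullback property from \cite{Jermyn2011} makes $R$ an isometry of $\operatorname{Imm}/\text{translations}$ onto this open subset of the flat $L^2$ space, geodesics are therefore exactly the straight segments that remain in the image, and the translation between the vanishing of $(1-t)q_0+tq_1$ and the antiparallelism condition $c_0'(\theta)=-\lambda c_1'(\theta)$, $\lambda=\mu^2$, is carried out correctly in both directions. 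Uniqueness and the geodesic formula $c(t,\cdot)=R^{-1}((1-t)R(c_0)+tR(c_1))$ then follow as you say (modulo the usual caveat that everything is modulo translations, since $R^{-1}$ normalizes the starting point).

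The one genuine gap concerns the final claim. You establish the distance formula only \emph{when the straight segment stays in the image}, i.e.\ when a geodesic exists. As stated in the theorem (and as proved in \cite{Bauer2014b}), the identity $d(c_0,c_1)=\lVert R(c_0)-R(c_1)\rVert_{L^2}$ holds for \emph{all} pairs of open curves, including those for which the segment leaves the image and no geodesic exists. For that you need an additional approximation argument: the geodesic distance is the infimum of lengths of paths whose SRVT stays in the set $\{q:|q|\neq 0\}$, and one must show that any straight segment crossing the complement can be replaced by nearby admissible paths of length arbitrarily close to the $L^2$ distance. This works because, for $d\ge 2$, the obstruction (vanishing of $q$ at some parameter) can be avoided by arbitrarily small perturbations of the path, so the infimum is unchanged even though it is not attained; without this step your proof only yields the inequality $d(c_0,c_1)\ge$ (in fact, no formula at all) in the non-geodesic case. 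Apart from this missing step, the proposal is correct and matches the approach of the cited proof.
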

For a proof of this theorem see \cite{Bauer2014b}. The situation for closed curves is less explicit.
We have the following characterization of the image of the SRVT, which will build the fundament of our algorithms:

\begin{theorem}\label{SRVT-closed}
The image $R(\operatorname{Imm}(S^1,\mathbb R^d))$ of the manifold of closed curves under the SRVT-transform is a codimension $d$ submanifold of the flat space $C^{\infty}(S^1, \mathbb R^d)$.
A basis of the orthogonal complement $\left(T_qR(\operatorname{Imm}(S^1,\mathbb R^d)\right)^\perp$ is given by the $d$ vectors
\begin{align}\label{basis}
U_i(q)=\frac{1}{|q|}\,\left( q_i q_1, \dots, q_i^2+|q|^2, \dots, q_i q_d  \right).
\end{align} 
\end{theorem}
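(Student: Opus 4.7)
The plan is to realize $R(\operatorname{Imm}(S^1,\mathbb R^d))$ as the regular level set of a smooth map into $\mathbb R^d$ and then identify the normal bundle with the gradients of the component functions of that map. Using the inverse formula $R^{-1}(q)=\int_0^\tau|q|q\,d\theta$ derived for open curves, a smooth function $q\in C^\infty(S^1,\mathbb R^d\setminus\{0\})$ lies in the image of $R$ precisely when the primitive $\int_0^\tau|q|q\,d\theta$ is periodic, i.e.\ when
\begin{equation*}
\Phi(q):=\int_{S^1}|q(\theta)|\,q(\theta)\,d\theta=0\in\mathbb R^d.
\end{equation*}
Thus $R(\operatorname{Imm}(S^1,\mathbb R^d))=\Phi^{-1}(0)$ inside the open set $\{q:|q|\neq 0\}\subset C^\infty(S^1,\mathbb R^d)$, and the first step is to argue smoothness of $\Phi$ on this open set.

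Next I would compute the $L^2$-gradient of each component $\Phi_i$. A direct differentiation gives
\begin{equation*}
D\Phi_i(q)\cdot h=\int_{S^1}\Bigl(\tfrac{q_i q_j}{|q|}+|q|\,\delta_{ij}\Bigr)h_j\,d\theta,
\end{equation*}
so that the $L^2$-gradient of $\Phi_i$ at $q$ has $j$-th component $(q_iq_j+|q|^2\delta_{ij})/|q|$. This is exactly the vector $U_i(q)$ from \eqref{basis}. To conclude that $\Phi$ is a submersion, one verifies pointwise linear independence of the $U_i(q)$: if $\sum_i\alpha_iU_i(q)=0$ pointwise, then setting $A=\sum_i\alpha_iq_i$ one obtains the algebraic system $Aq_j+\alpha_j|q|^2=0$; substituting back gives $A=-A$, forcing all $\alpha_i=0$. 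Pointwise independence of these smooth vectors immediately yields their linear independence in $L^2(S^1,\mathbb R^d)$, and hence the surjectivity of $D\Phi(q):C^\infty(S^1,\mathbb R^d)\to\mathbb R^d$.

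Since the target of $\Phi$ is finite-dimensional, the usual submersion theorem for maps from a Fréchet manifold to $\mathbb R^d$ applies: $\Phi^{-1}(0)$ is a splitting submanifold of codimension $d$, with tangent space at $q$ equal to $\ker D\Phi(q)$. The $L^2$-orthogonal complement of $\ker D\Phi(q)$ is by definition the span of the gradients $\nabla\Phi_1(q),\dots,\nabla\Phi_d(q)$, and these gradients are precisely $U_1(q),\dots,U_d(q)$. Their linear independence, established above, ensures they form a basis, which proves the claim.

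The main obstacle is the manifold-structure step in the Fréchet category: one must ensure that the standard submersion theorem is available here. This is unproblematic because the codomain of $\Phi$ is the finite-dimensional space $\mathbb R^d$, so surjectivity of $D\Phi(q)$ automatically yields a topological complement to $\ker D\Phi(q)$ (spanned by the $U_i(q)$), and one can construct a local chart by splitting along this finite-dimensional complement. The rest of the argument is bookkeeping, the only genuine computation being the derivative of $\Phi$ and the pointwise $d\times d$ invertibility check that powers the linear-independence argument.
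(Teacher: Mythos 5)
Your argument is correct: realizing the image as the zero level set of $\Phi(q)=\int_{S^1}|q|\,q\,d\theta$, computing $D\Phi_i(q)h=\int_{S^1}\langle U_i(q),h\rangle\,d\theta$ so that the $L^2$-gradients are exactly the vectors \eqref{basis}, and checking their pointwise (hence $L^2$) independence is precisely the standard proof; the paper itself gives no proof and instead refers to \cite{Jermyn2011,Bauer2014b}, where this level-set/submersion argument is the one used. Your closing remark correctly identifies the only delicate point (the submersion theorem in the Fr\'echet setting), which is unproblematic here because the complement spanned by the $U_i(q)$ is finite dimensional.
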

Using this basis, efficient numerical methods for calculating geodesics between closed curves have been developed, see \cite{Jermyn2011,Bauer2014b}.

\subsection{The matching functional on the space of parametrized curves}
To define our similarity measure on the product space  $\operatorname{Imm}(M,\R^d)\times M^n$, we will first 
introduce an energy functional that is defined for arbitrary paths in $\operatorname{Imm}(M,\R^d)$. 
We will then define the similarity measure as the minimal energy over all paths with given boundary shapes $[c_0]$ and $[c_1]$. The important features of the similarity measure on the space of unparametrized curves will be:
\begin{itemize}
 \item[(a)] The similarity measure does not depend on the choice of representatives $(c_i,\vtheta_i)$ of the observed shapes $([c_i],\vC^i)$.
 \item[(b)] The optimal deformation is guided by both the shape of the boundary curves and by the feature point information.
 \item [(c)] The similarity measure forces an exact matching of the unparametrized curves, but only an inexact matching of the feature point information.
\end{itemize}

Given a curve $\hat{c} \in \operatorname{Imm}(M,\R^d)$, we denote in
the following by $\hat{c}(\vtheta)$ the whole vector of points
$\hat{c}(\theta^i) \in \R^d$, $1 \le i \le n$.

For parameter values $\vtheta_0 = (\theta_0^i)_i\in M^n$ and feature points $\vC_1 = (C_1^i)_i\in (\mathbb R^d)^n$
we define the energy functional for a given path 
$c\colon [0,1]\rightarrow \operatorname{Imm}(M,\mathbb R^d)$ as:
\begin{equation}\label{energy_funct}
\mathcal E(\vtheta_0,\vC_1)(c)=\int_0^1 G_c(c_t,c_t) dt + \lambda \operatorname{FM}\left(c(1,\vtheta_0),\vC_1\right)\,.
\end{equation}
Here $G_c(\cdot,\cdot)$ is any reparametrization invariant metric on $\operatorname{Imm}(M,\mathbb R^d)$ and $\operatorname{FM}$ denotes some similarity measure 
on $\R^{d\times n}$.
The only conditions on $\operatorname{FM}$ we impose at the moment are
that $\operatorname{FM}$ is lower semi-continuous and
$\operatorname{FM}(\vC,\vC) = 0$. 
The first condition is necessary for the subsequent energy minimization,
while the second
condition implies that constant paths actually have zero energy.
When we discuss later the actual computation of
energy minimizing paths, we will introduce further conditions that make their
numerical approximation possible.

\begin{lemma}\label{lem:invariance}
The energy functional \eqref{energy_funct} satisfies the invariance property 
\begin{align}
 \mathcal E(\varphi^{-1}(\vtheta_0),\vC_1)(c\circ\varphi)=\mathcal E(\vtheta_0,\vC_1)(c).
\end{align}
\end{lemma}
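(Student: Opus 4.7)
The plan is to split the energy into its two summands and verify invariance of each separately. The only ingredients needed are the reparametrization invariance of $G$ (which is assumed) and the basic observation that composition with $\varphi^{-1}$ on the parameter values cancels the reparametrization of the curve at time $t=1$.

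First I would unravel the meaning of $c\circ\varphi$ as a path in $\operatorname{Imm}(M,\mathbb{R}^d)$, namely $(c\circ\varphi)(t,\theta) = c(t,\varphi(\theta))$. Since $\varphi$ is independent of $t$, differentiating in $t$ gives $(c\circ\varphi)_t = c_t \circ \varphi$. Applying the reparametrization invariance of $G$ pointwise in $t$ then yields
\begin{equation*}
G_{c\circ\varphi}\bigl((c\circ\varphi)_t,(c\circ\varphi)_t\bigr) = G_{c\circ\varphi}(c_t\circ\varphi,c_t\circ\varphi) = G_c(c_t,c_t),
\end{equation*}
so integrating over $t\in[0,1]$ shows that the first term of $\mathcal{E}$ is unchanged when $c$ is replaced by $c\circ\varphi$. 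Note that this step does not involve $\vtheta_0$ or $\vC_1$ at all.

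For the feature matching term, the key observation is that $(c\circ\varphi)(1,\varphi^{-1}(\vtheta_0)) = c(1,\varphi(\varphi^{-1}(\vtheta_0))) = c(1,\vtheta_0)$ componentwise in $i=1,\dots,n$, so $\operatorname{FM}\bigl((c\circ\varphi)(1,\varphi^{-1}(\vtheta_0)),\vC_1\bigr) = \operatorname{FM}(c(1,\vtheta_0),\vC_1)$ without any assumption on $\operatorname{FM}$ beyond its definition. Combining the two observations gives the claimed identity.

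There is no real obstacle: the lemma is essentially a bookkeeping statement that the two terms in $\mathcal{E}$ are designed to be jointly invariant under the diagonal action $(c,\vtheta_0)\mapsto (c\circ\varphi,\varphi^{-1}(\vtheta_0))$ of $\operatorname{Diff}(M)$. The only point worth emphasizing in the write-up is that the feature points $\vC_1$ live in $\mathbb{R}^d$ and are therefore not affected by the reparametrization at all, which is precisely why the compensating $\varphi^{-1}$ must act on $\vtheta_0$ and not on $\vC_1$.
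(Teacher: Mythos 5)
Your proposal is correct and matches the paper's proof: both apply the reparametrization invariance of $G$ (using $(c\circ\varphi)_t = c_t\circ\varphi$) to the metric term and the cancellation $(c\circ\varphi)(1,\varphi^{-1}(\vtheta_0)) = c(1,\vtheta_0)$ to the feature term. Your write-up is simply a more explicit version of the same two-line calculation given in the paper.
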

\begin{remark}
The meaning of this invariance property will become clear in Sect. \ref{sim_unparametrized}, where we will consider the action of the diffeomorphism group on the 
quotient space $\operatorname{Imm}(M,\mathbb R^d)\times M^n$.
\end{remark}

\begin{proof}
Using the reparametrization invariance of the metric $G_c$ we calculate
\begin{align*}
 \mathcal E(\varphi^{-1}(\vtheta_0),\vC_1)(c\circ\varphi)&= 
 \int_0^1 G_{c\circ\varphi}((c\circ\varphi)_t,(c\circ\varphi)_t) dt + \lambda \operatorname{FM}\left((c\circ\varphi)(1,\varphi^{-1}(\vtheta_0)),\vC_1\right)\\
 &=\int_0^1 G_c(c_t,c_t) dt + \lambda \operatorname{FM}\left(c(1,\vtheta_0),\vC_1\right)=\mathcal E(\vtheta_0,\vC_1)(c).
\end{align*}
\end{proof}
Using this energy functional we define our similarity measure on the product space $\operatorname{Imm}(M,\mathbb R^d)\times M^n$ of parametrized curves with feature points as
\begin{equation}\label{sim_parametrized}
\boxed{
d_P\left((c_0,\vtheta_0),(c_1,\vtheta_1)\right):= \underset{c:[0,1]\rightarrow \operatorname{Imm}}{\operatorname{inf}} \mathcal E(\vtheta_0,c_1(\vtheta_1))(c)}
\end{equation}
where the infimum is taken over all paths $c$ in $\operatorname{Imm}(M,\R^d)$ that satisfy
\begin{equation}
c(0,\cdot)=c_0 \text{ and } c(1,\cdot)=c_1\,.
\end{equation}
\begin{remark}
We do not call the similarity measure a distance, since it is not symmetric in general. However, it would be straightforward to construct a symmetric version of this. This will be described in Sect. \ref{sect:symmetric}.
\end{remark}

Because we fix the endpoint $c(1,\cdot)=c_1$, we can write the similarity
measure $d_P$ as
\[
d_P((c_0,\vtheta_0),(c_1,\vtheta_1))
= \inf_{c:[0,1]\to\operatorname{Imm}} \Bigl[\int_0^1 G_c(c_t,c_t)\,dt\Bigr]
+ \lambda\operatorname{FM}(c_1(\vtheta_0),c_1(\vtheta_1)).
\]
That is, we only minimize the first term of the energy functional,
and we do not allow to change the value of the second term at all. 
The meaning of the second term, will become clear when we consider it on the shape space of unparametrized curves.

\subsection{The similarity measure on the shape space of unparametrized, feature curves.}\label{sim_unparametrized}
In this section we want to use the previously defined similarity measure on parametrized curves to induce a similarity measure on the shape space of unparametrized curves with feature point information.
Therefore we have to determine the induced action of the diffeomorphism group on the product space $\operatorname{Imm}(M,\R^d)\times M^n$. 
On the first factor $\operatorname{Imm}(M,\R^d)$ it is simply given by composition from the right.
To compute the action on the second factor $M^n$ we need to compute the effect of a reparametrization on the feature points. We have:
\begin{align}
C_0^i=c_0(\theta_0^i)=c(\varphi(\varphi^{-1}(\theta_0^i)))=(c\circ\varphi)(\varphi^{-1}(\theta_0^i)). 
\end{align}
Thus the induced action on the product space is given by
\begin{equation}
(c_0,\vtheta_0)\circ\varphi=(c_0\circ\varphi,\varphi^{-1}(\vtheta_0)). 
\end{equation}
Using the invariance of our similarity measure -- cf. Lemma~\ref{lem:invariance} -- we obtain the following result:
\begin{theorem}\label{thm:similarity_measure}
The similarity measure \eqref{sim_parametrized} on $\operatorname{Imm}(M,\R^d)\times M^n$ induces a similarity measure on 
the shape space of unparametrized curves with additional feature point information. The induced functional is given by:
\begin{equation}\label{distance_feature_curves}
d\left(([c_0],\vC_0),([c_1],\vC_1)\right) := \underset{\varphi\in \operatorname{Diff}(M)}{\inf} d_P\left((c_0,\vtheta_0),(c_1\circ\varphi,\varphi^{-1}(\vtheta_1)\right).
\end{equation}
Here $(c_j,\vtheta_j)$  are arbitrary representatives of the shapes $([c_j],\vC_j)$.
\end{theorem}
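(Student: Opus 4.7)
The plan is to verify that the right-hand side of \eqref{distance_feature_curves} does not depend on the particular choice of representatives $(c_0,\vtheta_0)$ and $(c_1,\vtheta_1)$ of the shapes $([c_0],\vC_0)$ and $([c_1],\vC_1)$. Once this is established the formula defines a genuine functional on the quotient space, so the theorem follows immediately. Two independent invariances must be checked: invariance under the choice of representative of $([c_1],\vC_1)$, and invariance under the choice of representative of $([c_0],\vC_0)$.

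Invariance in the second slot is essentially bookkeeping. Replacing $(c_1,\vtheta_1)$ by $(c_1\circ\psi_1,\psi_1^{-1}(\vtheta_1))$ with $\psi_1\in\operatorname{Diff}(M)$, the expression $d_P\bigl((c_0,\vtheta_0),((c_1\circ\psi_1)\circ\varphi,\varphi^{-1}\psi_1^{-1}(\vtheta_1))\bigr)$ becomes $d_P\bigl((c_0,\vtheta_0),(c_1\circ\tilde\varphi,\tilde\varphi^{-1}(\vtheta_1))\bigr)$ after the substitution $\tilde\varphi=\psi_1\circ\varphi$; since $\tilde\varphi$ ranges over $\operatorname{Diff}(M)$ as $\varphi$ does, the infimum is unchanged.

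Invariance in the first slot is the place where the real content, namely Lemma~\ref{lem:invariance}, enters. Fix $\psi_0\in\operatorname{Diff}(M)$ and set $\tilde c_0:=c_0\circ\psi_0$, $\tilde\vtheta_0:=\psi_0^{-1}(\vtheta_0)$. I first observe that the feature target inside the $\operatorname{FM}$-term is representative-independent: for every $\varphi$ one has $(c_1\circ\varphi)(\varphi^{-1}(\vtheta_1))=c_1(\vtheta_1)=\vC_1$, so the second term in \eqref{energy_funct} is always $\lambda\operatorname{FM}(\hat c(1,\tilde\vtheta_0),\vC_1)$, regardless of the path $\hat c$. To compare the two path-infima I substitute $\hat c=c\circ\psi_0$; then $\hat c$ runs over paths from $\tilde c_0$ to $c_1\circ\varphi$ precisely when $c$ runs over paths from $c_0$ to $c_1\circ\varphi\circ\psi_0^{-1}$, and Lemma~\ref{lem:invariance} yields
\[
\mathcal E(\tilde\vtheta_0,\vC_1)(c\circ\psi_0)=\mathcal E(\vtheta_0,\vC_1)(c).
\]
Passing to the infimum over paths therefore gives
\[
d_P\bigl((\tilde c_0,\tilde\vtheta_0),(c_1\circ\varphi,\varphi^{-1}(\vtheta_1))\bigr)
= d_P\bigl((c_0,\vtheta_0),(c_1\circ\bar\varphi,\bar\varphi^{-1}(\vtheta_1))\bigr)
\]
with $\bar\varphi:=\varphi\circ\psi_0^{-1}$; taking the infimum over $\varphi$ and using once more that $\varphi\mapsto\bar\varphi$ is a bijection of $\operatorname{Diff}(M)$ produces the required invariance.

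The main obstacle is purely a matter of careful bookkeeping: one must keep the feature-parameter argument $\vtheta_0$ and the path-endpoint argument $c_0$ synchronised, and notice that the two independent reparametrizations $\psi_0,\psi_1$ on the two sides can be absorbed into the single reparametrization $\varphi$ that is being optimised over. No non-trivial analytic machinery is needed beyond the invariance of $\mathcal E$ already supplied by Lemma~\ref{lem:invariance}.
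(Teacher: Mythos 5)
Your proof is correct and follows essentially the same route as the paper: the representative of $c_1$ is absorbed into the reparametrization $\varphi$ being minimized over, while independence of the representative of $c_0$ is obtained from Lemma~\ref{lem:invariance} via the substitution $\hat c = c\circ\psi_0$. You are in fact slightly more explicit than the paper in tracking the endpoint shift $\bar\varphi = \varphi\circ\psi_0^{-1}$ and the representative-independence of the feature target $c_1(\vtheta_1)$, both of which the paper's proof leaves implicit.
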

\begin{remark}
Note, that the energy functional will force an exact matching of the unparametrized curves, but only an inexact matching of the feature points. The reason for this is that we assume the feature points 
to be an additional manually entered information that is possibly error-prone.
\end{remark}

\begin{proof}
 We need to show that $d$ does not depend on the actual choice of representatives $c_0$ and $c_1$. 
 Any other representatives of $[c_i]$ can be written as $c_i\circ\varphi$ for some diffeomorphism $\varphi$. Since we are minimizing over
 all possible reparametrizations of $c_1$, the functional clearly does not depend on the choice of the representative $c_1$. It remains to verify
 the independence of reparametrizations of $c_0$. Therefore we calculate
 \begin{multline*}
 \underset{\varphi\in \operatorname{Diff}(M)}{\inf}  d_P\left((c_0 \circ \psi ,\psi^{-1}(\vtheta_0)),(c_1\circ\varphi,\varphi^{-1}(\vtheta_1)\right)\\=
\underset{\varphi\in \operatorname{Diff}(M)}{\inf} \left(\underset{c:[0,1]\rightarrow \operatorname{Imm}}{\operatorname{inf}} \mathcal E\left(\psi^{-1}(\vtheta_0),c_1(\vtheta_1)\right)(c)\right)\,,
 \end{multline*}
where the infimum is taken over all paths $c$ that satisfy the boundary conditions
\begin{equation*}
 c(0,\cdot)=c_0\circ\psi, \quad c(1,\cdot) = c_1\circ\varphi\,.  
\end{equation*}
Using the invariance property of the functional -- cf.~Lemma \ref{lem:invariance} -- we can rewrite this as
\begin{align*}
& \underset{\varphi\in \operatorname{Diff}(M)}{\inf}  d_P\left((c_0 \circ \psi ,\psi^{-1}(\vtheta_0)),(c_1\circ\varphi,\varphi^{-1}(\vtheta_1)\right)\\&\qquad\qquad=
\underset{\varphi\in \operatorname{Diff}(M)}{\inf} \left(\underset{c:[0,1]\rightarrow \operatorname{Imm}}{\operatorname{inf}} \mathcal E\left(\vtheta_0,c_1(\vtheta_1)\right)(c\circ\psi^{-1})\right)\\&\qquad\qquad=
\underset{\varphi\in \operatorname{Diff}(M)}{\inf} \left(\underset{\tilde c:[0,1]\rightarrow \operatorname{Imm}}{\operatorname{inf}} \mathcal E\left(\vtheta_0,c_1(\vtheta_1)\right)(\tilde c)\right)
 \end{align*}
such that 
\begin{equation*}
 \tilde c(0,\cdot)=c_0\circ\psi\circ\psi^{-1}=c_0, \quad c(1,\cdot) = c_1\circ\varphi, 
\end{equation*}
which concludes the proof.
\end{proof}

We note that the similarity measure $d$ can also be written as
\begin{multline*}
d(([c_0],\vC_0),([c_1],\vC_1))\\
= \inf_{\varphi\in\operatorname{Diff}(M)}\biggl(\inf_{c:[0,1]\to\operatorname{Imm}} \biggl[\int_0^1 G_c(c_t,c_t)\,dt\biggr]
+ \lambda\operatorname{FM}(c_1\circ\varphi(\vtheta_0),\vC_1)\biggr),
\end{multline*}
where the inner infimum is taken over all paths $c$ that satisfy the conditions
$c(0,\cdot) = c_0$ and \mbox{$c(1,\cdot) = c_1 \circ \varphi$}.

\begin{remark}
Due to the invariance with respect to the reparametrization group, all the  metrics $G_c$ descend to the shape space 
of unparametrized curves, i.e., they induce a metric on the quotient space 
$\mathcal S:=\operatorname{Imm}(M,\R^d)/\operatorname{Diff}(M)$ such that the projection $$\pi: \operatorname{Imm}(M,\R^d)\rightarrow \operatorname{Imm}(M,\R^d)/\operatorname{Diff}(M)$$
is a Riemannian submersion. For a detailed discussion of this topic we refer to the article \cite{Michor2007}.  For $\lambda=0$ -- i.e., no feature point matching -- the similarity measure \eqref{distance_feature_curves}
is then given by the induced geodesic distance on the quotient space. If we assume existence of a minimizer, it would be given by a horizontal geodesic on the top space, the manifold of parametrized curves.
For $\lambda >0$ minimizers of \eqref{distance_feature_curves} will still be geodesics on $\operatorname{Imm}(M,\R^d)$, however their initial velocity will in general not be horizontal anymore. The induced curve on the quotient space 
$\operatorname{Imm}(M,\R^d)/\operatorname{Diff}(M)$ will thus not be a geodesic. In \cite{AlKrLoMi2003} such curves have been called ballistic curves.
\end{remark}

\begin{remark}
  In certain applications, there exist natural \emph{reference parametrizations}
  $c_{i,\textrm{ref}} \in [c_i]$ of the curves one is interested in. For instance, if one deals with
  skeletal animations (for details see Section~\ref{numerics} below), the curves are mappings
  from a time interval into the so-called joint space.
  In this case, the reference parametrization of a given animation uses a uniform
  frame rate, and reparametrizations correspond to local speed-ups or slow-downs.
  In such a setting, it makes sense to define the feature matching term based
  on the similarity of parameter values rather than the points on the curve.
  Given some distance measure $\operatorname{\widehat{FM}}$ on $M^n$,
  this can be achieved in our setting by defining
  \[
  \operatorname{FM}(\vC_1,\vC_2) :=
  \begin{cases}
    \operatorname{\widehat{FM}}(c_{0,\textrm{ref}}^{-1}(\vC_0),c_{1,\textrm{ref}}^{-1}(\vC_1))
    & \text{ if } \vC_i \in c_{i,\textrm{ref}}(M)^n,\\
    +\infty & \text{ else.}
  \end{cases}
  \]
  Then the similarity measure $d$ can be written as
  \begin{multline*}
    d(([c_0],\vC_0),([c_1],\vC_1))\\
    = \inf_{\varphi\in\operatorname{Diff}(M)}\biggl(\inf_{c:[0,1]\to\operatorname{Imm}} \biggl[\int_0^1 G_c(c_t,c_t)\,dt\biggr]
    + \lambda\operatorname{\widehat{FM}}(\varphi(\vtheta_{0,\textrm{ref}}),\vtheta_{1,\textrm{ref}})\biggr),
  \end{multline*}
  where we consider in the inner infimum only paths satisfying
  $c(0,\cdot) = c_{0,\textrm{ref}}$ and $c(1,\cdot) = c_{1,\textrm{ref}}\circ\varphi$.
\end{remark}

\subsection{Symmetrization of the feature matching term}\label{sect:symmetric}

With the definition in~\eqref{energy_funct}, the energy is not symmetric
with respect to the two shapes, because the feature points are treated differently.
It is, however, straightforward to symmetrize the energy functional by defining
\[
\mathcal{E}_{\rm sym}(\vtheta_0,\vC_0,\vtheta_1,\vC_1)(c) 
= \int_0^1 G_c(c_t,c_t)dt + \lambda\bigl(\operatorname{FM}(c(1,\vtheta_0),\vC_1)
+ \operatorname{FM}(c(0,\vtheta_1),\vC_0)\bigr).
\]
This energy functional satisfies the invariance property
\[
\mathcal{E}(\varphi^{-1}(\vtheta_0),\vC_0,\varphi^{-1}(\vtheta_1),\vC_1)(c\circ\varphi)
= \mathcal{E}(\vtheta_0,\vC_0,\vtheta_1,\vC_1)(c)
\]
for any diffeomorphism $\varphi$ of $M$.
From this energy functional we obtain a distance on the space of
parametrized curves with feature points,
\[
d_{P,\textrm{sym}}\bigl((c_0,\vtheta_0),(c_1,\vtheta_1)\bigr)
:= \inf_{c:[0,1]\to\operatorname{Imm}} \mathcal{E}_{\rm sym}\bigl(\vtheta_0,c_0(\vtheta_0),\vtheta_1,c_1(\vtheta_1)\bigr)(c),
\]
where the infimum is taken over all paths $c$ such that $c(0,\cdot)=c_0$ and $c(1,\cdot) = c_1$.
Now, the invariance property of $\mathcal{E}$ implies that
\[
d_{P,\textrm{sym}}\bigl((c_0\circ\psi,\psi^{-1}(\vtheta_0)),(c_1\circ\psi,\psi^{-1}(\vtheta_1))\bigr)
= d_{P,\textrm{sym}}\bigl((c_0,\vtheta_0),(c_1,\vtheta_1)\bigr),
\]
whenever $\psi$ is a diffeomorphism of $M$.
This allows, similar to Theorem~\ref{thm:similarity_measure}, to define a 
symmetric similarity measure on the shape space of unparametrized curves with feature points by
\begin{equation}\label{eq:d_symm}
d(([c_0],\vC_0)),([c_1],\vC_1))
= \inf_{\varphi\in\operatorname{Diff}(M)}
d\bigl((c_0,c_0^{-1}(\vC_0)),(c_1\circ\varphi,\varphi^{-1}\circ c_1^{-1}(\vC_1))\bigr).
\end{equation}

The problem of this similarity measure is that the computation
of the infimum in~\eqref{eq:d_symm} requires the evaluation
of terms of the form $\operatorname{FM}(\varphi^{-1}\circ c_1^{-1}(\vC_1),\vC_0)$,
which involve the inverse of the diffeomorphism $\varphi$.
In particular for derivative based optimization methods like
gradient descent, this poses problems, as they would require in addition
to $\varphi^{-1}$ also its derivative.
For this reason we have used only the non-symmetric similarity measure
in all the computational examples below.
We note, however, that discretizations of the symmetric term can, in certain cases,
be minimized efficiently with an approach based on dynamic programming
(see Sect.~\ref{dynamic}).

\section{Matching feature curves with the elastic metric.}
In the following, we will study one particular choice for both the Riemannian metric and the feature matching term.
Our choice of the Riemannian metric $G$ on $\operatorname{Imm}(M,\mathbb R^d)$ is the elastic metric with coefficients $a=1, \ b=\frac12$, see  equation \eqref{elasticmetric}.
This is particularly beneficial if we work on open curves. In this case, we have an explicit formula
for the induced geodesic distance of the metric $G$, cf.~Theorem \ref{SRVT-open}. 
For the feature point matching we use the squared $\ell^2$--norm on the parameter
space $M$ with respect to some reference curves $c_{i,\textrm{ref}} =: c_i$.
We will discuss different choices for the feature matching term below in Remarks \ref{rem:otherFM} and \ref{rem:otherFM2}.

As a direct consequence, we obtain the following formula for the  matching functional for feature curves:

\begin{corollary}
Using the elastic metric $G$ with coefficients $a=1, b=\frac12$ and the $\ell^2$--norm error term the similarity measure  \eqref{distance_feature_curves} on the set of open feature curves reads as:
\begin{multline}
d\left(([c_0],\vC_0),([c_1], \vC_1)\right) \\= \underset{\varphi\in \operatorname{Diff}(S^1)}{\inf} \left(\int_0^{2\pi} \|\frac{c_0'}{\sqrt{|c_0'|}}-\sqrt{\varphi'}\frac{c_1'\circ\varphi }{\sqrt{|c_1'|} \circ \varphi}  \|^2_{\mathbb R^d}d\theta 
+ \lambda \sum_{i=1}^n |\varphi(\theta_0^i)-\theta_1^i|^2\right)\,.
\end{multline}
Here, $(c_j,\vtheta_j)$  are arbitrary representatives of the shapes $([c_j], \vC_j)$.
\end{corollary}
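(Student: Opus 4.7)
The plan is to derive the claim directly from Theorem~\ref{thm:similarity_measure} by plugging in the two concrete choices: the elastic metric with parameters $a=1$, $b=\tfrac12$, for which Theorem~\ref{SRVT-open} supplies an explicit closed form for the geodesic distance, and the $\ell^2$ feature matching term from the reference-parametrization setup. First I would expand the general similarity measure into its nested form
\[
d(([c_0],\vC_0),([c_1],\vC_1)) = \inf_{\varphi\in\operatorname{Diff}([0,2\pi])} \biggl(\inf_{c} \int_0^1 G_c(c_t,c_t)\,dt + \lambda\operatorname{FM}\bigl((c_1\circ\varphi)(\vtheta_0),\vC_1\bigr)\biggr),
\]
observing that the feature term does not depend on the path $c$, while the inner infimum is taken over paths with $c(0,\cdot)=c_0$ and $c(1,\cdot)=c_1\circ\varphi$. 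This follows from the definition of $d_P$ together with the identity $(c_1\circ\varphi)(\varphi^{-1}(\vtheta_1))=c_1(\vtheta_1)=\vC_1$.

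Next I would treat the path integral. Since $G$ with $a=1$, $b=\tfrac12$ is the pullback of the flat $L^2$-metric under the SRVT $R$, the inner infimum equals the squared geodesic distance on $\operatorname{Imm}([0,2\pi],\R^d)$ between $c_0$ and $c_1\circ\varphi$. By Theorem~\ref{SRVT-open}, this equals $\int_0^{2\pi}\|R(c_0)-R(c_1\circ\varphi)\|^2_{\R^d}\,d\theta$. A short chain-rule computation, using $(c_1\circ\varphi)'=\varphi'\cdot(c_1'\circ\varphi)$ and $\varphi'>0$, gives
\[
R(c_1\circ\varphi) = \frac{\varphi'\,(c_1'\circ\varphi)}{\sqrt{\varphi'\,|c_1'\circ\varphi|}} = \sqrt{\varphi'}\,\frac{c_1'\circ\varphi}{\sqrt{|c_1'|\circ\varphi}},
\]
matching the first integrand in the claim.

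For the feature term I would invoke the reference-parametrization viewpoint from the earlier remark: with the chosen representatives $c_0, c_1$ playing the role of reference curves, the $\ell^2$-based $\operatorname{FM}$ translates pointwise distances in $\R^d$ into squared parameter differences. Concretely, since $c_1(\theta_1^i)=C_1^i$ and the point $(c_1\circ\varphi)(\theta_0^i)$ corresponds under $c_1$ to the parameter value $\varphi(\theta_0^i)$, one obtains $\operatorname{FM}((c_1\circ\varphi)(\vtheta_0),\vC_1)=\sum_i|\varphi(\theta_0^i)-\theta_1^i|^2$. Substituting both ingredients into the nested infimum and keeping the outer infimum over $\varphi$ yields the claimed formula.

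The main subtlety is that Theorem~\ref{SRVT-open} only guarantees existence of a minimizing geodesic when $c_0'$ and $(c_1\circ\varphi)'$ are nowhere antiparallel; otherwise the straight line segment joining $R(c_0)$ and $R(c_1\circ\varphi)$ leaves the image of $R$. However, the theorem still asserts the closed-form expression for the geodesic \emph{distance} itself, so the identification of the inner path-energy infimum with the $L^2$ integral remains valid for every $\varphi$, which is all the corollary requires.
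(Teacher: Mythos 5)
Your proposal is correct and follows essentially the same route the paper intends: the corollary is obtained by specializing the nested-infimum form of the similarity measure from Theorem~\ref{thm:similarity_measure}, identifying the inner path-energy infimum with the squared geodesic distance given explicitly by Theorem~\ref{SRVT-open} via $R(c_1\circ\varphi)=\sqrt{\varphi'}\,\frac{c_1'\circ\varphi}{\sqrt{|c_1'|\circ\varphi}}$, and evaluating the reference-parametrization $\ell^2$ feature term as $\sum_i|\varphi(\theta_0^i)-\theta_1^i|^2$. Your closing remark on non-attainment of the geodesic is a sensible clarification consistent with the paper's statement of the distance formula.
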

This observation yields the following strategy for solving the feature curves matching problem:
\begin{itemize}
 \item Minimize 
  \begin{align}\label{matching_funct3}
\mathcal E_1^{\operatorname{op}}(\varphi):= \int_0^{2\pi} \|\frac{c_0'}{\sqrt{|c_0'|}}-\sqrt{\varphi'}\frac{c_1'\circ\varphi }{\sqrt{|c_1'|} \circ \varphi}  \|^2_{\mathbb R^d}d\theta 
+ \lambda \sum_{i=1}^n |\varphi(\theta_0^i)-\theta_1^i|^2\,, 
\end{align}
over $\varphi\in \Diff([0,2\pi])$.
 \item Calculate the geodesic connecting $c_0$ to $c_1\circ \varphi$ using the explicit formula from Theorem \ref{SRVT-open}.
\end{itemize}
\begin{remark}
Note, that the minimizer of \eqref{matching_funct3} will in general not be a diffeomorphism, but will only have a non-negative derivative, cf. \cite{Michor2008a}
To guarantee existence of the minimizer in the diffeomorphism group one would need to use a stronger metric on $\operatorname{Imm}(M,\mathbb R^d)$ see \cite{Bruveris2014}.
\end{remark}
For closed curves the situation is more complicated, since there is no explicit formula for the geodesic distance. Thus the matching functional does not simplify and \eqref{matching_funct3}
for closed curves reads as:
 \begin{equation}\label{matching_funct4}
\mathcal E_1^{\operatorname{cl}}(\varphi):=  \operatorname{dist}(c_0,c_1 \circ \varphi)^2  + \lambda \sum_{i=1}^n |\varphi(\theta_0^i)-\theta_1^i|^2.
\end{equation}
However, due to the explicit characterization of the image of the SRVT, there are fast and efficient ways to numerically calculate the geodesic distance, cf. \cite{Jermyn2011,Bauer2014b}. 

In the following we will present two methods to minimize these functionals: a dynamic programming approach and a gradient descent algorithm. 

\subsection{A gradient descent approach}\label{gradient}
The integral component of the gradient descent algorithm -- the variation of the energy functional \eqref{matching_funct3} -- will be derived in the following lemma. To simplify the exposition we introduce the notation
\begin{equation}
q\star \varphi := \sqrt{\varphi'} (q\circ\varphi), 
\end{equation}
for the action of the diffeomorphism group on the space of SRV-transformed functions. Then we have:
\begin{lemma}
The variation of $\mathcal E^{\operatorname{op}}$ in direction $\delta \varphi$ is given by:
\begin{equation}\label{eq:variationphi}
\begin{aligned}
  \delta \mathcal E_1(\varphi)(\delta\varphi)&=
 \int_0^{2\pi}\langle q_0-q_1\star \varphi ,\delta\varphi'. (q_1\star \varphi) - 2 (q_1\star \varphi)'. \delta\varphi \rangle_{\mathbb R^d}d\theta\\&\qquad\qquad+ 2\lambda \sum_{i=1}^n (\varphi(\theta_0^i)-\theta_1^i)\delta \varphi(\theta_0^i).
\end{aligned}
\end{equation}
The $L^2$--gradient of the energy functional \eqref{matching_funct3} is then given by:
\begin{equation*}
\begin{aligned}
 \operatorname{grad}(\mathcal E_1(\varphi))&= 
-\left\langle q_0,  \frac{(q_1\star\varphi)'}{\varphi'}\right\rangle_{\mathbb R^d}+ \left\langle q'_0, \frac{q_1\star\varphi}{\varphi'} \right\rangle_{\mathbb R^d} +2 \lambda \sum_{i=1}^n   (\varphi(\theta)-\theta_1^i) \delta_{\theta_0^i}(\theta),
\end{aligned}
\end{equation*}
where $\delta_{\theta_0^i}$ denotes the delta distribution and $q_j=R(c_j)$. 
\end{lemma}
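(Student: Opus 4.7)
The plan is to compute the first variation of $\mathcal{E}_1^{\operatorname{op}}$ directly in two stages and then read off the $L^2$-gradient via integration by parts. I would split the energy into the curve-matching part $\int_0^{2\pi}\|q_0-q_1\star\varphi\|^2\,d\theta$ and the feature penalty $\lambda\sum_i|\varphi(\theta_0^i)-\theta_1^i|^2$. The variation of the feature term is immediate, yielding $2\lambda\sum_i(\varphi(\theta_0^i)-\theta_1^i)\delta\varphi(\theta_0^i)$; in the $L^2$-gradient this contributes a Dirac mass at each $\theta_0^i$ with weight $2\lambda(\varphi(\theta_0^i)-\theta_1^i)$.

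For the curve term the crucial observation is a conservation law: the change of variables formula gives
\[
\int_0^{2\pi}|q_1\star\varphi|^2\,d\theta = \int_0^{2\pi}\varphi'\,|q_1|^2\circ\varphi\,d\theta = \int_0^{2\pi}|q_1|^2\,d\theta,
\]
so that $\int\langle q_1\star\varphi,\delta(q_1\star\varphi)\rangle\,d\theta=0$. Consequently the variation of the curve term collapses to $-2\int\langle q_0,\delta(q_1\star\varphi)\rangle\,d\theta$. Applying the product rule gives
\[
\delta(q_1\star\varphi) = \frac{\delta\varphi'}{2\sqrt{\varphi'}}(q_1\circ\varphi) + \sqrt{\varphi'}(q_1'\circ\varphi)\,\delta\varphi,
\]
which, when combined with the identity $(q_1\star\varphi)' = \frac{\varphi''}{2\sqrt{\varphi'}}(q_1\circ\varphi) + (\varphi')^{3/2}(q_1'\circ\varphi)$, leads to the first stated form of the variation.

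To extract the $L^2$-gradient I would then integrate by parts the term carrying $\delta\varphi'$. Boundary contributions vanish because diffeomorphisms of $[0,2\pi]$ fix the endpoints, so $\delta\varphi(0)=\delta\varphi(2\pi)=0$. Substituting $q_1\circ\varphi=(q_1\star\varphi)/\sqrt{\varphi'}$ and rewriting $\sqrt{\varphi'}(q_1'\circ\varphi) = (q_1\star\varphi)'/\varphi' - \varphi''(q_1\star\varphi)/(2(\varphi')^2)$ collects everything into a single coefficient of $\delta\varphi$; the two contributions proportional to $\varphi''/(\varphi')^2$ cancel pairwise, leaving exactly $\langle q_0',(q_1\star\varphi)/\varphi'\rangle - \langle q_0,(q_1\star\varphi)'/\varphi'\rangle$. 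Together with the Dirac masses from the feature term, this produces the claimed gradient formula.

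The main obstacle I expect is the bookkeeping in this last simplification. One must switch fluently between $(q_1\circ\varphi)$, $(q_1'\circ\varphi)$ and the geometrically natural $q_1\star\varphi$, $(q_1\star\varphi)'$ while keeping track of the various powers of $\sqrt{\varphi'}$. The essential cancellation is between the $\varphi''$ term arising from differentiating $1/\sqrt{\varphi'}$ during the integration by parts and the $\varphi''$ term appearing when one inverts the product rule for $(q_1\star\varphi)'$ to solve for $\sqrt{\varphi'}(q_1'\circ\varphi)$; verifying that the signs and factors of $2$ match so that they exactly annihilate each other is the technical heart of the argument.
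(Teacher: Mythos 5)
Your proposal is correct and takes essentially the same route as the paper's proof: split off the feature term, apply the product rule to $\delta(q_1\star\varphi)$, integrate the $\delta\varphi'$ term by parts using $\delta\varphi(0)=\delta\varphi(2\pi)=0$, and invoke the identity relating $q_1'\star\varphi$ to $(q_1\star\varphi)'$ so that the $\varphi''$ contributions cancel. The only (harmless) difference is that you discard the $\langle q_1\star\varphi,\delta(q_1\star\varphi)\rangle_{L^2}$ contribution at the outset via the reparametrization invariance of $\lVert q_1\star\varphi\rVert_{L^2}$, whereas the paper carries the full factor $q_0-q_1\star\varphi$ through the computation and cancels the corresponding self-terms pointwise only in the final line of the gradient calculation.
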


\begin{proof}
Using the notation $q_j=R(c_j)$ the Energy functional can be written as
\begin{equation*}
\mathcal E_1(\varphi)= \|q_0- q_1\star\varphi\|^2_{L^2}+ \lambda \sum_{i=1}^n |\varphi(\theta_0^i)-\theta_1^i|^2\,.
\end{equation*}
We will calculate the variation of the two parts separately.
For the first part we have:
\begin{align*}
&\delta \left(\|q_0-q_1\star\varphi\|^2_{L^2}\right)(\delta\varphi)= \delta \left(\|q_0-\sqrt{\varphi'} (q_1\circ\varphi)\|^2_{L^2}\right)(\delta\varphi)\\&\qquad
=-2\int_0^{2\pi}\langle q_0-\sqrt{\varphi'} (q_1\circ\varphi),  \frac{\delta\varphi'}{2\sqrt{\varphi'}}q_1\circ\varphi + \sqrt{\varphi'}q_1'\circ\varphi\, \delta\varphi       \rangle_{\mathbb R^d}d\theta\\&\qquad=
-2\int_0^{2\pi}\langle q_0-q_1\star\varphi,  \frac{\delta\varphi'}{2\varphi'}q_1\star\varphi + (q_1'\star\varphi)\, \delta\varphi       \rangle_{\mathbb R^d}d\theta\,.
\end{align*}
To read off the $L^2$-gradient we have to integrate by parts the $\delta\varphi'$ term. Since $\delta \varphi$ vanishes at the boundary we have
\begin{align*}
&-2\int_0^{2\pi}\langle q_0- q_1\star\varphi,  \frac{\delta\varphi'}{2\varphi'}q_1\star\varphi \rangle_{\mathbb R^d}d\theta 
=2\int_0^{2\pi}\delta\varphi \left( \langle q_0-q_1\star\varphi,\frac{q_1\star\varphi}{2\varphi'} \rangle_{\mathbb R^d}\right)' d\theta\\
&\qquad=\int_0^{2\pi}\delta\varphi \langle q_0'-(q_1\star\varphi)',\frac{q_1\star\varphi}{\varphi'} \rangle_{\mathbb R^d} d\theta\\&\qquad\qquad+
\int_0^{2\pi}\delta\varphi \langle q_0-(q_1\star\varphi),\frac{(q_1\star\varphi)'\varphi'-(q_1\star\varphi)\varphi''}{\varphi'^2} \rangle_{\mathbb R^d} d\theta.
\end{align*}
Using that
\begin{equation}
q_1'\star\varphi= \frac{(q_1\star\varphi)'}{\varphi'}-\frac12(q_1\star\varphi)\frac{\varphi''}{(\varphi')^{2}}\,, 
\end{equation}
we obtain  the gradient of the first term:
\begin{align*}
\operatorname{grad}\left(  \|q_0- q_1\star \varphi\|^2_{L^2}     \right)
&= \left\langle q_0- q_1\star\varphi,  -2 q_1'\star\varphi+  \frac{(q_1\star\varphi)'}{\varphi'}-\frac{(q_1\star\varphi)\varphi''}{\varphi'^2}  \right\rangle_{\mathbb R^d}
\\&\qquad\qquad+ \left\langle q'_0- (q_1\star\varphi)', \frac{q_1\star\varphi}{\varphi'} \right\rangle_{\mathbb R^d}\\
& =  \left\langle q_0- q_1\star\varphi,  - \frac{(q_1\star\varphi)'}{\varphi'}\right\rangle_{\mathbb R^d}+ \left\langle q'_0- (q_1\star\varphi)', \frac{q_1\star\varphi}{\varphi'} \right\rangle_{\mathbb R^d}\\
&= -\left\langle q_0,  \frac{(q_1\star\varphi)'}{\varphi'}\right\rangle_{\mathbb R^d}+ \left\langle q'_0, \frac{q_1\star\varphi}{\varphi'} \right\rangle_{\mathbb R^d}\,.
\end{align*}
For the second summand we calculate
\begin{align*}
\delta \left(\sum_{i=1}^n |\varphi(\theta_0^i)-\theta_1^i|^2\right)(\delta\varphi) &= 2\sum_{i=1}^n (\varphi(\theta_0^i)-\theta_1^i)\delta \varphi(\theta_0^i)\\&= 
2\sum_{i=1}^n \int_0^{2\pi} \delta_{\theta_0^i}(\theta) (\varphi(\theta)-\theta_1^i) \delta \varphi(\theta) d\theta\,.
\end{align*}
Putting everything together, the formula for the gradient follows.
\end{proof}

\begin{remark}
 Since $\mathcal E_1^{\operatorname{cl}}$ can be seen as the restriction of $\mathcal E_1^{\operatorname{op}}$ to a co-dimension $d$ submanifold, 
the gradient of $\mathcal E_1^{\operatorname{cl}}$ is simply given by the projection onto the tangent space of this submanifold, cf.~Theorem \ref{SRVT-closed}.
\end{remark}

Using the above formulas, the implementation of the gradient descend algorithm is straightforward.
In \cite{Jermyn2011}, however, it has been shown that it can be beneficial to represent diffeomorphisms $\varphi$ as the tuple $\varphi = (x_0,\sqrt{\varphi'})$.
If one works with open curves the initial value $x_0$ is always zero. Then there is a one to one correspondence 
between $\varphi$ and $\sqrt{\varphi'}$. Denoting $\psi=\sqrt{\varphi'}$, the energy functional on open curves reads as:
\begin{equation*}
\mathcal E_2(\psi)= \|q_0-\psi. (q_1\circ(\int_0^\theta\psi^2d\tau))\|^2_{L^2}+ \lambda \sum_{i=1}^n |\int_0^{\theta_0^i}\psi^2d\tau-\theta_1^i|^2\,.
\end{equation*}
We can now also derive the variation of $\mathcal E$ in the  $\psi$--representation:
\begin{lemma}
The variation of $\mathcal E_2(\psi)$ in direction $\delta \psi$ is given by:
\begin{equation}\label{eq:variation_psi}
\begin{aligned}
&\delta \mathcal E_2(\psi)(\delta\psi)\\
&=  - 2 \int_0^{2\pi}\left( \delta \psi \left\langle q_0- q_1\star\varphi,   \frac{q_1 \star \varphi}{\psi}\right\rangle_{\mathbb R^d}
+ \left(\int_0^\theta\psi \delta\psi d\tau\right) \left\langle q_0- q_1\star\varphi, 2 q'_1\star\varphi           \right\rangle_{\mathbb R^d}\right)d\theta\\
&\qquad + 4\lambda \sum_{i=1}^n (\varphi(\theta_0^i)-\theta_1^i) \int_0^{\theta_0^i} \psi \delta \psi\, d\tau
\end{aligned}
\end{equation}
\end{lemma}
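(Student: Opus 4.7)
The plan is to derive $\delta \mathcal{E}_2(\psi)$ by a direct application of the chain rule, using the substitution $\varphi(\theta)=\int_0^\theta \psi^2\,d\tau$ which links $\mathcal{E}_2$ to $\mathcal{E}_1$. With this identification, $\varphi'=\psi^2$ and the linearization satisfies $\delta\varphi'=2\psi\,\delta\psi$ together with $\delta\varphi(\theta)=2\int_0^\theta \psi\,\delta\psi\,d\tau$. These two identities are the only new computations needed; everything else will follow from either Lemma on $\delta\mathcal{E}_1$ or a direct expansion.

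First I would handle the $L^2$-term. Rewriting $q_1\star\varphi = \psi\,(q_1\circ\varphi)$, a straightforward product/chain rule yields
\[
\delta\bigl(\psi\,(q_1\circ\varphi)\bigr) = \delta\psi\,(q_1\circ\varphi) + \psi\,(q_1'\circ\varphi)\,\delta\varphi.
\]
Using $q_1\circ\varphi = (q_1\star\varphi)/\psi$ and $\psi\,(q_1'\circ\varphi) = q_1'\star\varphi$, this becomes $\delta\psi\,(q_1\star\varphi)/\psi + 2(q_1'\star\varphi)\int_0^\theta \psi\,\delta\psi\,d\tau$, so
\[
\delta\|q_0-q_1\star\varphi\|_{L^2}^2 = -2\int_0^{2\pi}\Bigl\langle q_0-q_1\star\varphi,\;\tfrac{\delta\psi}{\psi}(q_1\star\varphi)+2(q_1'\star\varphi)\int_0^\theta \psi\,\delta\psi\,d\tau\Bigr\rangle_{\mathbb R^d}d\theta,
\]
which matches the first two terms claimed in~\eqref{eq:variation_psi}.

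For the landmark term, I would simply use the chain rule at the evaluation points: since $\varphi(\theta_0^i) = \int_0^{\theta_0^i}\psi^2 d\tau$, we get $\delta \varphi(\theta_0^i) = 2\int_0^{\theta_0^i} \psi\,\delta\psi\,d\tau$, and therefore
\[
\delta\Bigl(\sum_{i=1}^n|\varphi(\theta_0^i)-\theta_1^i|^2\Bigr) = 4\sum_{i=1}^n (\varphi(\theta_0^i)-\theta_1^i)\int_0^{\theta_0^i}\psi\,\delta\psi\,d\tau.
\]
Multiplying by $\lambda$ and adding to the previous contribution yields exactly~\eqref{eq:variation_psi}.

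The computation is essentially routine, with no boundary terms to worry about since no integration by parts is performed; the main place where care is required is consistently translating between the $\varphi$-representation and the $\psi$-representation (in particular, using the identity $\psi(q_1'\circ\varphi)=q_1'\star\varphi$, which avoids the awkward $\varphi''$ terms that would appear if one tried to go through the $\varphi$-gradient formula of the previous lemma and transported it back via a change of variables).
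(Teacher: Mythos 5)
Your proposal is correct and follows essentially the same route as the paper: a direct product/chain-rule differentiation of $\psi\,(q_1\circ\varphi)$ using $\delta\varphi(\theta)=2\int_0^\theta\psi\,\delta\psi\,d\tau$, followed by rewriting via $q_1\circ\varphi=(q_1\star\varphi)/\psi$ and $\psi(q_1'\circ\varphi)=q_1'\star\varphi$, and the same elementary computation for the landmark term. The only cosmetic difference is that the paper carries the explicit $\int_0^\theta\psi^2\,d\tau$ notation throughout and substitutes $\varphi$ only at the end, whereas you introduce the $\star$-notation immediately.
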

\begin{proof}
For a variation $\delta\psi$ we calculate
\begin{align*}
& \delta \Bigl(\Bigl\lVert q_0-\psi. \Bigl(q_1\circ\Bigl(\int_0^\theta\psi^2d\tau\Bigr)\Bigr)\Bigr\rVert^2_{L^2}\Bigr)(\delta\psi)\\&
=2\int_0^{2\pi}\Bigl\langle q_0-\psi. q_1\circ\Bigr(\int_0^\theta\psi^2d\tau\Bigr),  -\delta\psi . q_1\circ\Bigl(\int_0^\theta\psi^2d\tau\Bigr)\\
&\hskip150pt -2 \psi.\Bigl(q'_1\circ\Bigl(\int_0^\theta\psi^2d\tau\Bigr)\Bigr)  \int_0^\theta\psi \delta\psi\, d\tau \Bigr\rangle_{\mathbb R^d}d\theta\\
&= - 2\int_0^{2\pi}\delta \psi \Bigl\langle q_0-\psi. q_1\circ\Bigl(\int_0^\theta\psi^2d\tau\Bigr),   q_1\circ\Bigl(\int_0^\theta\psi^2d\tau\Bigr)\Bigr\rangle_{\mathbb R^d}d\theta\\
&\qquad- 4\int_0^{2\pi}\int_0^\theta\psi \delta\psi\, d\tau \Bigl\langle q_0-\psi. q_1\circ\Bigl(\int_0^\theta\psi^2d\tau\Bigr), \psi.\Bigl(q'_1\circ\Bigl(\int_0^\theta\psi^2d\tau\Bigr)\Bigr) \Bigr\rangle_{\mathbb R^d}d\theta\,.
\end{align*}
Using that $\int_0^\theta \psi^2d\tau=\varphi$ we obtain the formula for the first part.
For the second summand we calculate
\begin{align*}
\delta \left(\sum_{i=1}^n \Bigl|\int_0^{\theta_0^i}\psi^2d\tau-\theta_1^i\Bigr|^2\right)(\delta\psi) &=
2\sum_{i=1}^n \Bigl(\int_0^{\theta_0^i}\psi^2d\tau-\theta_1^i\Bigr) \int_0^{\theta_0^i}2\psi \delta \psi \, d\tau\,.
\end{align*}
\end{proof}

\begin{remark}\label{rem:otherFM}
  We do note that a gradient descent based algorithm can also be
  applied if the feature matching term $\operatorname{FM}$ is not the
  squared $\ell^2$-norm but rather a general differentiable function,
  the only difference being a corresponding modification of the last
  terms in the variations \eqref{eq:variationphi} and
  \eqref{eq:variation_psi}, respectively.
  For instance, in the case of a feature matching term
  $\operatorname{\widehat{FM}}$ defined on the parameter space $M^n$, the
  last term in~\eqref{eq:variation_psi} becomes
  \[
  2\lambda \sum_{i=1}^n \partial_i \operatorname{\widehat{FM}}(
  \varphi(\vtheta_0)-\vtheta_1) \int_0^{\theta_0^i} \psi \delta \psi\, d\tau.
  \]
\end{remark}

\subsection{Dynamic Programming}\label{dynamic}
As an alternative to the gradient descent method discussed above,
Dynamic Programming (DP) is often used to determine a piecewise linear approximation
of the optimal parametrization. We begin by introducing a local version of the energy functional \eqref{matching_funct3}.

Let $\mathcal{I} = \{\tau_0,\ldots,\tau_M\}$ be a discretization of the interval $[0,2\pi]$.
Given $k < i \in \mathcal{I}$ and $l < j \in \mathcal{I}$ and a strictly
increasing function $\varphi$ satisfying $\varphi([k,i]) = [l,j]$, we define
\begin{align}\label{Eq:LocalEnergy}
\bar{\mathcal E_1}(\varphi; k, l; i, j):= \int_k^i \Bigl\lvert\frac{c_0'}{\sqrt{|c_0'|}}-\sqrt{\varphi'}\frac{c_1'\circ\varphi }{\sqrt{|c_1'|} \circ \varphi}\Bigr\rvert^2_{\mathbb R^d}d\theta 
+ \lambda \sum_{m: l < \theta_1^m \le j} |\varphi(\theta_0^m)-\theta_1^m|^2.
\end{align}
In the special case where $\varphi$ is the linear function
\[
\varphi_\ind(\tau) := l  + (\tau - k)\frac{j-l}{i-k},
\]
and
\[
q_\ind(\tau) := \frac{c_1'\circ\varphi_{k,l;i,j}}{\sqrt{\lvert c'_1\circ\varphi_{k,l;i,j}\rvert}}\sqrt{\frac{j-l}{i-k}},
\]
is the corresponding SRV transform of the reparametrized curve,
the energy functional reduces to
\begin{equation}\label{eq:Eklij}
E(k,l;i,j) :=
\bar{\mathcal{E}_1}(\varphi_\ind; k, l; i,j)
=
\int_k^i | q_0 - q_\ind |^2_{\R^d} \, d\theta
+ \lambda\!\!\! \sum_{m: l < \theta_1^m \leq j} \lvert \varphi_\ind(\theta_0^m) - \theta_1^m\rvert^2.
\end{equation}

Denote now by $\Phi$ the set of all piecewise linear and increasing homeomorphisms \mbox{$\varphi\colon [0,2\pi] \to [0,2\pi]$} with vertices on the grid $\mathcal{I}\times \mathcal{I}$.
Denote moreover by $\Phi_{k,l}$ the set of all $\varphi\in\Phi$ satisfying $\varphi(k)=l$.
Now let for $i$, $j\in\mathcal{I}$
\[
H(i,j) := \min_{\varphi\in\Phi_{i,j}} \bar{\mathcal{E}_1}(\varphi; 0, 0; i,j)
\]
and denote by $\varphi_{i,j}$ the (any) corresponding minimizer.
That is, $H(i,j)$ is the minimal energy required for matching the curve segments
$c_0|_{[0,i]}$ and $c_1|_{[0,j]}$ using a piecewise linear reparametrization defined
on the given grid.
In order to find a global reparametrization, we need to find $H(2\pi,2\pi)$
and a corresponding optimal reparametrization $\bar{\varphi} := \varphi_{2\pi,2\pi}$.

Now note that $H$ satisfies the recursion
\begin{equation}\label{eq:Hij}
H(i,j) = \min_{k,l\in\mathcal{I},\ k < i,\ l < j} E(k,l;i,j) + H(k,l),
\end{equation}
because of the additivity of $\bar{\mathcal{E}_1}$.
Thus $\varphi_{i,j}$ is given by
\begin{equation}\label{eq:phiij}
\varphi_{i,j}(\tau) =
\begin{cases}
\varphi_\ind(\tau) & \tau \in [k,i],\\
\varphi_{k,l}(\tau) & \tau \in [0,k],
\end{cases}
\text{ with } (k,l) \in \argmin_{k,l \in \mathcal{I},\ k < i,\ l < j} E(k,l;i,j) + H(k,l).
\end{equation}

In practice, this computation consists of two steps.
In a first step, we create the $M\times M$ matrix $H$ inductively
while keeping track of the minimizing indices $k$ and $l$ (see~\eqref{eq:Hij}).
In the second step, we determine the function $\bar{\varphi}$ by
backtracking the minimizing indices and using formula~\eqref{eq:phiij}.

In order to speed up the computation, we can restrict the set of admissible
indices in~\eqref{eq:Hij} and consider only indices $k$, $l$ close to $i$, $j$.
In practice, this corresponds to a restriction of the possible slopes of
the piecewise linear reparametrization $\bar{\varphi}$.
See Fig. \ref{Fig:DynamicProgrammingMask} for an example.

\begin{figure}
\center
\def\svgwidth{\columnwidth / 3}
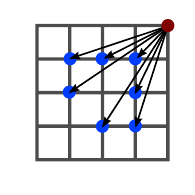
\caption{\label{Fig:DynamicProgrammingMask} The dynamic programming algorithm can be sped up massively by only considering predecessor indices $(k,l)$ close to the current index $(i,j)$ in \eqref{eq:Hij}.}
\end{figure}

\begin{remark}\label{rem:otherFM2}
  The minimization approach based on dynamic programming can also be
  applied for non-quadratic feature matching terms, as long as they
  decompose as
  \[
  \operatorname{FM}(\vC_0,\vC_1)
  = \sum_m \operatorname{FM}_m(C_0^m,C_1^m),
  \]
  with $\operatorname{FM}_i\colon \R^d \times \R^d \to \R_{\ge 0}$.
  For this, one only has to replace in~\eqref{eq:Eklij} the last sum
  by
  \[
  \sum_{m: l < \theta_1^m \leq j}
  \operatorname{FM}_m\bigl(c_0(\varphi_\ind(\theta_0^m)),c_1(\theta_1^m)\bigr).
  \]
  Note that this can be also used to implement hard constraints on the
  deviation of the feature points by setting
  \[
  \operatorname{FM}_m(C_0^m,C_1^m) =
  \begin{cases}
    +\infty & \text{ if } \lVert C_0^m-C_1^m\rVert > d_m,\\
    0 & \text{ if } \lVert C_0^m-C_1^m\rVert \le d_m,\\
  \end{cases}
  \]
  for some hard bounds $d_m \ge 0$.

  In addition, the dynamic programming approach readily extends to
  the symmetrization discussed in Section~\ref{sect:symmetric}, again
  as long as the feature matching terms decompose. Here, the last sum
  in~\eqref{eq:Eklij} has to be replaced by
  \[
  \sum_{m: l < \theta_1^m \leq j}
  \operatorname{FM}_m\bigl(c_0(\varphi_\ind(\theta_0^m)),c_1(\theta_1^m)\bigr)
  +
  \sum_{p: k < \theta_0^p \leq i}
  \operatorname{FM}_p\bigl(c_1(\varphi_\ind^{-1}(\theta_1^p)),c_0(\theta_0^p)\bigr).
  \]
  We stress here that the function $\varphi_\ind$ is linear on the
  interval $[k,i]$, and thus its inverse, which appears in the formula
  above, can be trivially computed.
\end{remark}

\section{Applications}\label{numerics}
As demonstrated in \cite{eslitzbichler2014}, shape matching techniques can be applied to certain computer animations such as, for instance, human walking motions.
This has uses in the entertainment industry (movie and TV production, and especially video games) as well as potential biomedical applications.

We will be working with skeletal animations, where motions are described in terms of bones and joints in an approximation of a human skeleton.
A typical approach to generate such animation data is to use \emph{motion capturing} methods, where a stuntman's motions are recorded by multiple cameras in a studio.
By tracking a multitude of points on the stuntman's body as he moves, the corresponding skeletal animation can be recovered.
These animations face a number of limitations, however, and often require additional postprocessing.
We refer to \cite{pejsa_state_2010, bruderlin_motion_1995, kovar_flexible_2003, kovar_automated_2004, eslitzbichler2014} for more details and examples.

A \emph{skeleton} is a directed acyclic graph where vertices and edges represent bones and joints, respectively.
A joint represents a transformation relationship between two bones.
In the case of human motions, transformations between bones are restricted to rotations.
Joints can have one to three degrees of freedom.
For example, the knee has one degree of freedom while the foot has two, and the shoulder has three.
Fig.~\ref{Fig:Skeleton} shows the skeleton used for our numerical experiment.
By representing rotations using Euler angles, we can collect all degrees of freedom of all joints in the skeleton as a high-dimensional torus, which we refer to as \emph{joint-space} and denote by \[\mathcal{J} := \mathbb{T}^d, \] where $d$ denotes the total number of degrees of freedom in the skeleton.

\begin{figure}
\center
\includegraphics[scale=0.25]{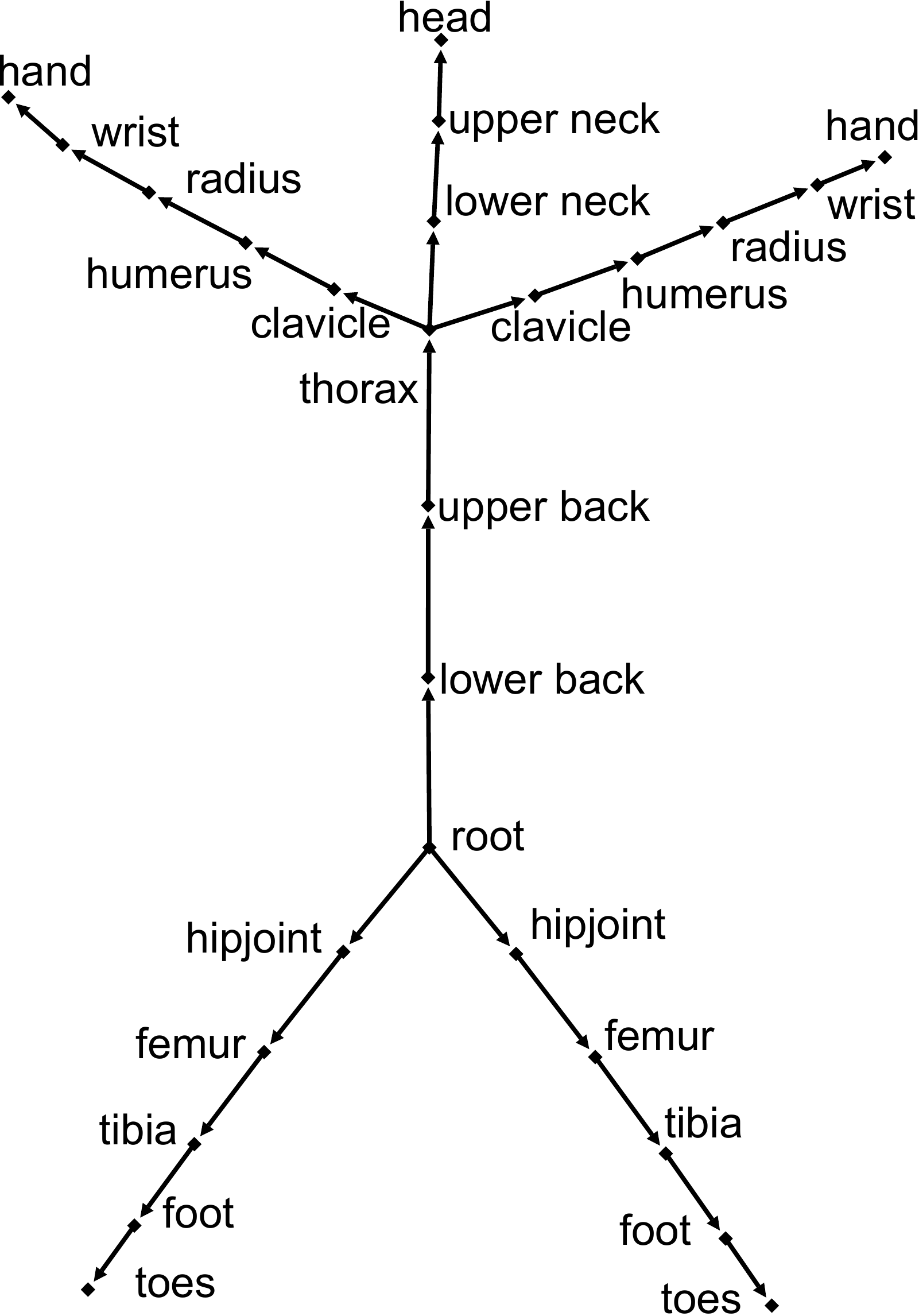} 
\caption{\label{Fig:Skeleton} This skeleton, which is based on data from the CMU Graphics Lab Motion Capture Database \cite{cmu2003}, was used for the animation experiments. Figure taken from \cite{eslitzbichler2014}.}
\end{figure}

An animation is then a function from a time interval to the joint-space $\mathcal{J}$, so that for every point in time we get a pose of the skeleton.

We can now unroll the joint-space torus in $\mathbb{R}^d$ and represent animations as parametrized curves $c_0$ and $c_1$ in this space.
The shape matching techniques developed previously can then be applied to these curves.
See \cite{eslitzbichler2014} for more details.

In Sect. \ref{Sect:FeatureAnimations}, we will show how feature matching can be successfully used to complement such existing shape matching methods.
We will begin, however, with a few examples of feature matching for planar curves to demonstrate the effects the additional feature term has on the curve matching.

\subsection{2d-curves}
As would be expected, adding feature points to shape matching can have a big impact on the resulting paths.

As a first example, we consider the matching of two open curves,
the first of which has three maxima and minima, while the second only has two
(see Fig.~\ref{Fig:Waves}).
Using only the elastic matching term without any specification of feature points,
the resulting minimum energy path between the two curves
approximately maps the first and the last extremum of the first curve to the first
and the last extremum of the second curve, while the extremum in the middle vanishes slowly.

Adding feature points, one can change the behavior of the optimal path significantly.
If, for instance, feature points are set on the last extremum of the first curve
but in the vicinity of only the last extremum of the second curve, then the optimal
path tries, during its evolution, to merge the last extrema of the first curve,
while its lower portion is matched quite closely to the lower half of the second curve
(see Fig.~\ref{Fig:Waves}, upper left).
Different behaviors follow from different choices of the feature points.

Fig.~\ref{Fig:ClosedCurves} shows similar behavior for closed curves.
When using a purely elastic matching term, moving from the first hand-pose to second one seen in Fig.~\ref{Fig:ClosedCurves}, we end up with visually unappealing interpolations.
By picking to corresponding fingertips on both hands as feature points, the algorithm achieves a much more natural looking transition from one shape to the other.
However, feature points need to be selected carefully, as the last row in Fig.~\ref{Fig:ClosedCurves} shows.
By attempting to match for example the thumb in one hand shape to the midway point between index and middle finger on the second hand shape, we induce a physically implausible interpolation that involves growing a new thumb.

We refer to the supplementary material\footnote{Supplementary material available at \url{https://wiki.math.ntnu.no/optimization/skeletal_animations}.} for videos demonstrating the differences. 
\begin{figure}
\center
\includegraphics[scale=1, trim= 2.2cm 1cm 0 1cm, clip=true]{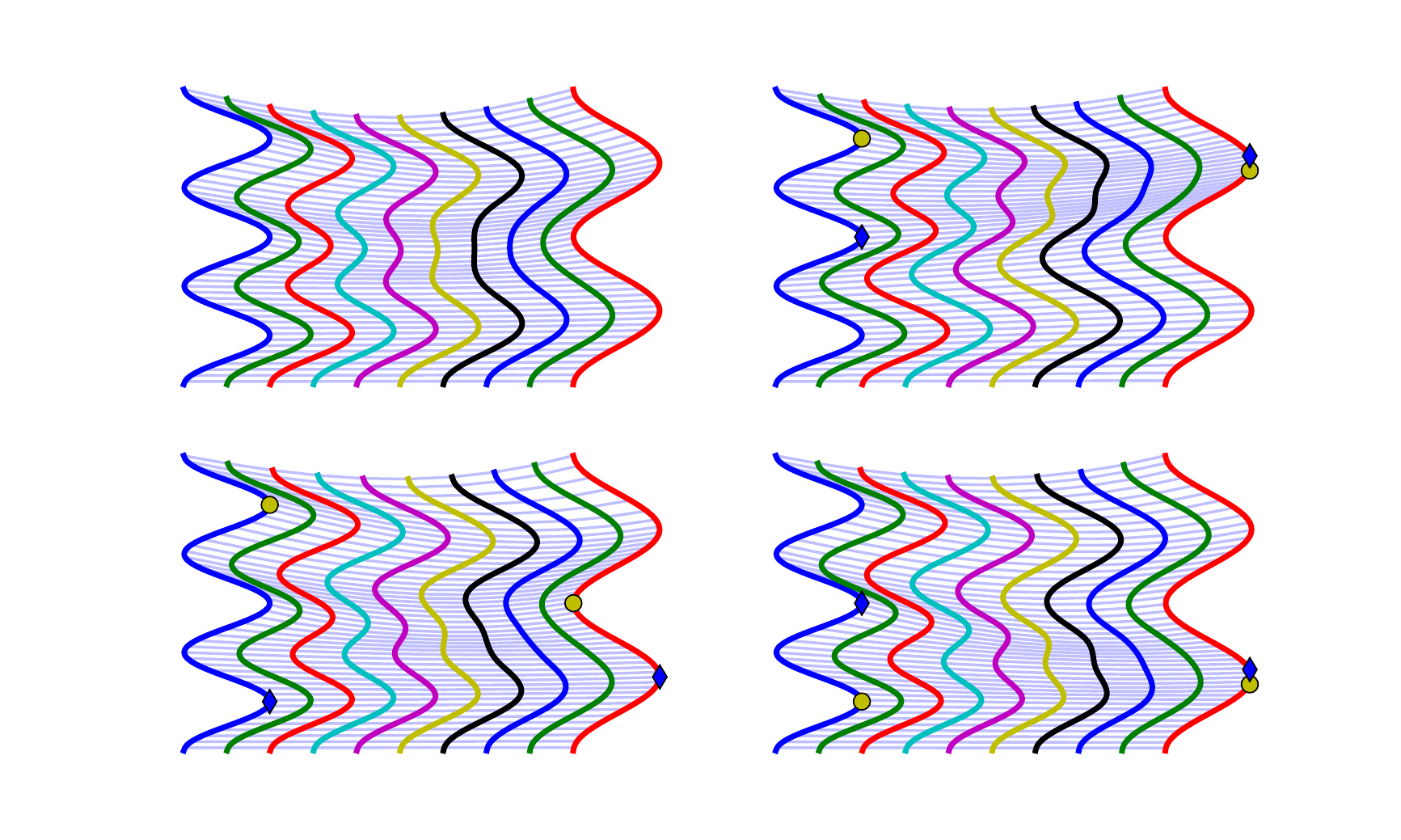} 
\caption{\label{Fig:Waves} Effect of picking different feature points when matching two shapes.
The top left figure shows results for shape matching using only an elastic energy functional without feature points.
The remaining figures show matching results for different combinations of feature points.
Corresponding markers on the left and right are matched, resulting in different paths between the given curves.}
\end{figure}

\begin{figure}
\center
\includegraphics[scale=1, trim= 2.2cm 1cm 0 1cm, clip=false]{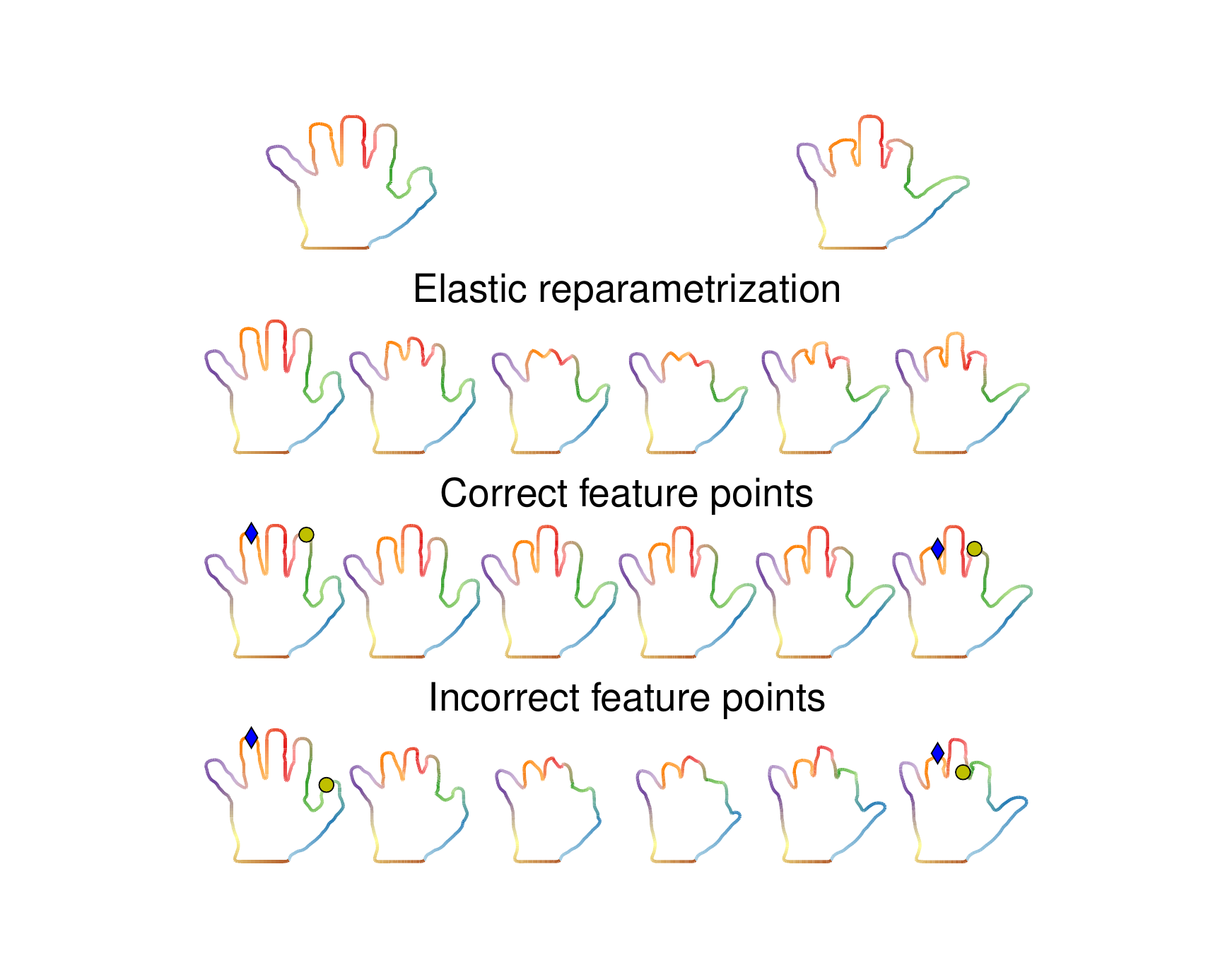} 
\caption{\label{Fig:ClosedCurves} Effect of picking different feature points when matching two different hand shapes.
In the top row, no feature points were set.
The purely elastic matching produces distorted shapes along the geodesic path between the two hand shapes.
In the middle row, feature points were set to match the tips of ring and index fingers correspondingly.
This results in more natural interpolated shapes.
In the bottom row, we see how incorrect feature matches cause some fingers to merge and new fingers to grow along the interpolation between the two shapes.
Corresponding markers on the left and right are matched, resulting in different paths between the given curves.
Colors along the curves indicate parametrization.
}
\end{figure}

\subsection{Applications to animations}
\label{Sect:FeatureAnimations}
We now turn to the use of feature point matching for animations.
Human animations come in an immensely large variety.
Walking motions alone can vary in speed, rhythm, step length, motions of the arms and so on.
Traditional elastic matching methods (i.e., without feature points) can be applied to a large number of animations, but can sometimes struggle with animations that have large differences in rhythm, for example when matching a walking animation to a limping animation.
Feature points can be used to help with determining an optimal reparametrization to align two animations in time.

\begin{figure}[h]
\center
\includegraphics[scale=1, trim= 2.5cm 1.5cm 0 1.5cm, clip=true]{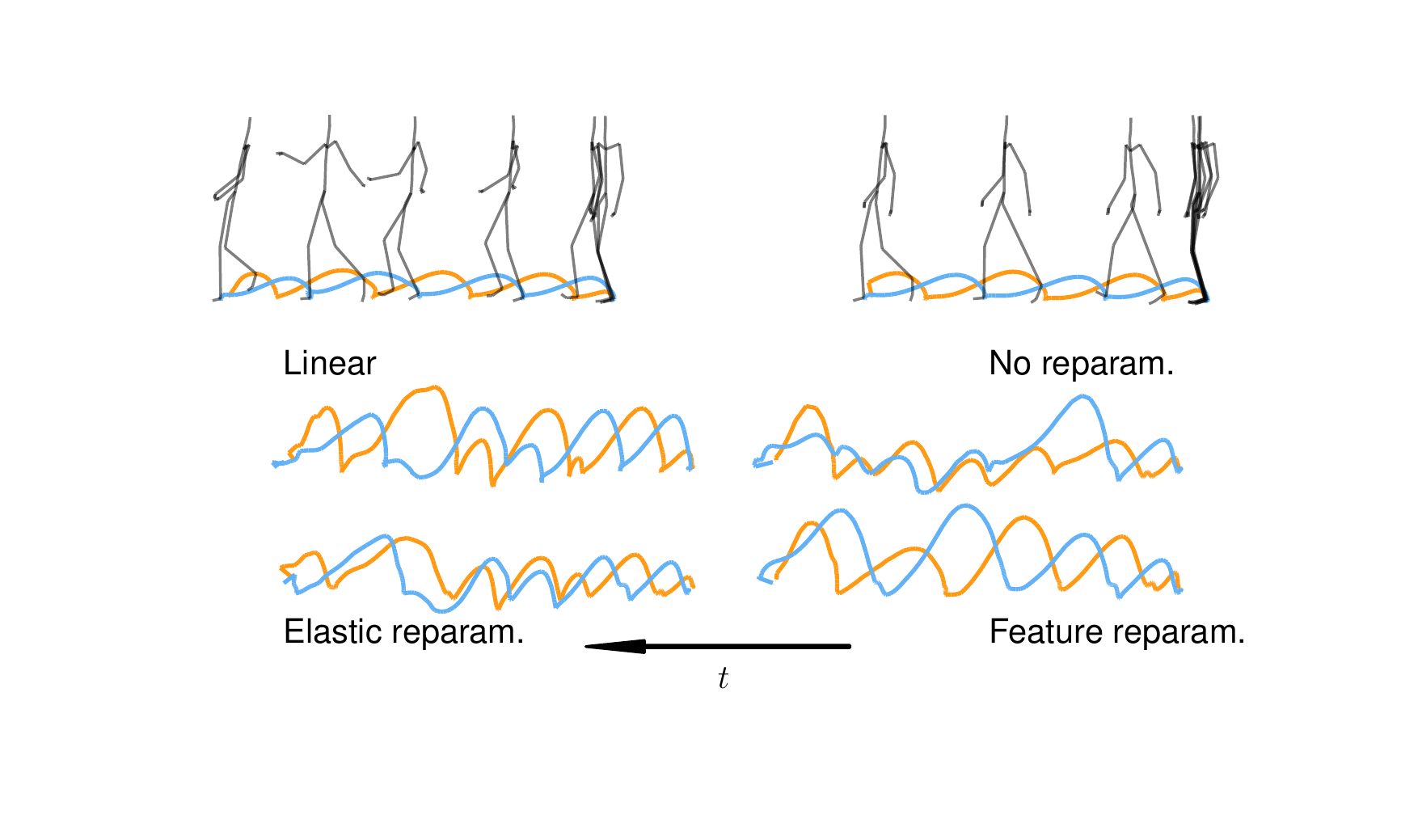}\vskip-15pt
\caption{\label{Fig:InterpolatedAnimations1} Example of using various methods to interpolate between two walking animations.
The blue and orange lines are the trajectories of the left and right feet respectively.
Note in particular how the two walking animations have different numbers of steps and how the various interpolated animations struggle with that.
We have from left to right and top to bottom the following methods: linear interpolation of the Euler angles, elastic matching without reparametrization, elastic matching with reparametrization and finally elastic and feature point matching with reparametrization.}
\end{figure}

\begin{figure}[h]
\center
\includegraphics[scale=1, trim= 2.5cm 1.5cm 0 1.5cm, clip=true]{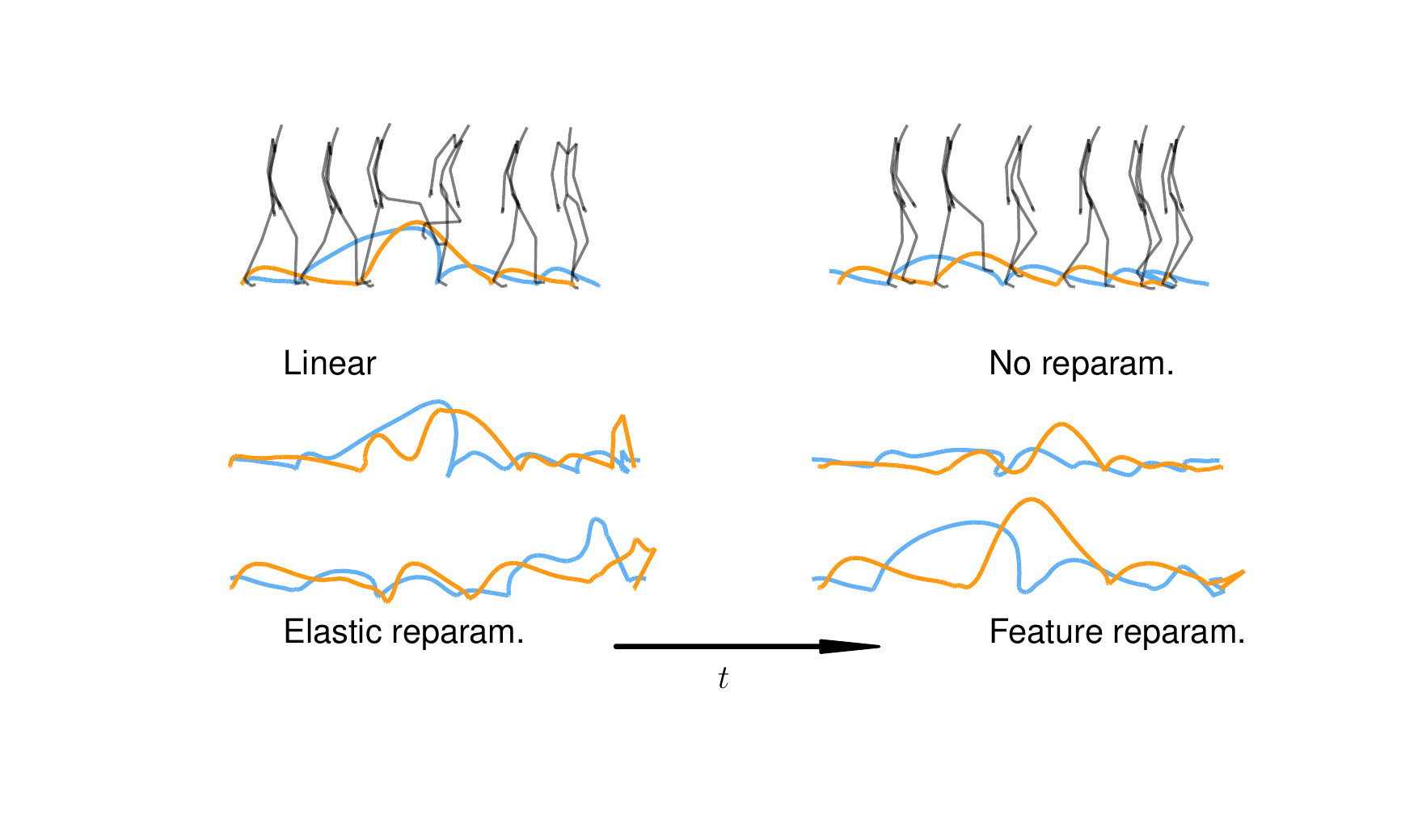}\vskip-15pt
\caption{\label{Fig:InterpolatedAnimations2} Example of using various methods to interpolate between two walking animations stepping over an obstacle.
The blue and orange lines are the trajectories of the left and right feet respectively.
We have from left to right and top to bottom the following methods: linear interpolation of the Euler angles, elastic matching without reparametrization, elastic matching with reparametrization and finally elastic and feature point matching with reparametrization.}
\end{figure}

Fig.~\ref{Fig:InterpolatedAnimations1} shows an example of using feature point information to aid in animation interpolation.
The goal is to calculate interpolations between two different walking animations.
These can be seen in the top row of Fig.~\ref{Fig:InterpolatedAnimations1}.
The animations differ in the number of steps, rhythm and arm motions.
In addition, the character in the second animation starts walking forward only after a short delay compared to the first.

The results of four different interpolation schemes (linear interpolation of the Euler angles, elastic matching with and without reparametrization and elastic and feature matching with reparametrization) can be seen on the bottom of Fig.~\ref{Fig:InterpolatedAnimations1}.
The two superimposed lines show the trajectories of the left and right feet for each calculated interpolation.
We can see how especially the varying numbers of steps in the two initial animation causes problems for the matching algorithms.

As feature points we picked the first three times when the left knee moves forward and past the right knee.
This is already enough information for the shape matching algorithm to determine a ``good'' (i.e., visually convincing) interpolation.

Similar results can be seen in the second example, which shows two walking animations stepping over an obstacle of different height as seen in Fig.~\ref{Fig:InterpolatedAnimations2}.
Again, the superimposed lines represent the trajectories of the feet, with noticeable bumps as the character steps over the obstacle.
As before, we have picked the first three times when the left knee moves forward and past the right knee as feature points.
With this additional information, the feature point matching algorithm manages to produce a visually convincing interpolation between the two animations, whereas the other matching algorithms fail and produce only garbled results.

We refer to the supplementary material\footnote{Supplementary material available at \url{https://wiki.math.ntnu.no/optimization/skeletal_animations}.} for a video demonstrating the differences.


\begin{thebibliography}{10}

\bibitem{abdelkader_silhouette-based_2011}
M.~F. Abdelkader, W.~Abd-Almageed, A.~Srivastava, and R.~Chellappa.
\newblock Silhouette-based gesture and action recognition via modeling
  trajectories on {R}iemannian shape manifolds.
\newblock {\em Computer Vision and Image Understanding}, 115(3):439 -- 455,
  2011.

\bibitem{AlKrLoMi2003}
D.~Alekseevsky, A.~Kriegl, M.~Losik, and P.~W. Michor.
\newblock The {R}iemannian geometry of orbit spaces---the metric, geodesics,
  and integrable systems.
\newblock {\em Publ. Math. Debrecen}, 62(3-4):247--276, 2003.
\newblock Dedicated to Professor Lajos Tam{\'a}ssy on the occasion of his 80th
  birthday.

\bibitem{Bauer2014b}
M.~Bauer, M.~Bruveris, S.~Marsland, and P.~W. Michor.
\newblock Constructing reparameterization invariant metrics on spaces of plane
  curves.
\newblock {\em Differential Geom. Appl.}, 34:139--165, 2014.

\bibitem{Bauer2014a}
M.~Bauer, M.~Bruveris, and P.~W. Michor.
\newblock Overview of the {G}eometries of {S}hape {S}paces and {D}iffeomorphism
  {G}roups.
\newblock {\em J. Math. Imaging Vision}, 50(1-2):60--97, 2014.

\bibitem{Bauer2014c}
M.~Bauer, M.~Bruveris, and P.~W. Michor.
\newblock {${R}$-transforms for {S}obolev ${H^2}$-metrics on spaces of plane
  curves.}
\newblock {\em Geometry, Imaging and Computing}, 1(1):1--56, 2014.

\bibitem{Bauer2014_preprint}
M.~Bauer and P.~Harms.
\newblock {Metrics on Spaces of Surfaces where Horizontality equals Normality}.
\newblock {\em http://arxiv.org/abs/1403.1436}, 2014.

\bibitem{bruderlin_motion_1995}
A.~Bruderlin and L.~Williams.
\newblock Motion signal processing.
\newblock In {\em Proceedings of the 22nd annual conference on Computer
  graphics and interactive techniques}, pages 97--104. {ACM}, 1995.

\bibitem{Bruveris2014}
M.~Bruveris, P.~W. Michor, and D.~Mumford.
\newblock {G}eodesic {C}ompleteness for {S}obolev {M}etrics on the {S}pace of
  {I}mmersed {P}lane {C}urves.
\newblock {\em Forum of Mathematics, Sigma}, 2, 8 2014.

\bibitem{cmu2003}
{Carnegie-Mellon}.
\newblock Carnegie-mellon mocap database., 2003.

\bibitem{eslitzbichler2014}
M.~Eslitzbichler.
\newblock {Modelling character motions on infinite-dimensional manifolds}.
\newblock {\em The Visual Computer}, page 1–12, 2014.

\bibitem{Fuchs2009}
M.~Fuchs, B.~J{\"u}ttler, O.~Scherzer, and H.~Yang.
\newblock Shape metrics based on elastic deformations.
\newblock {\em J. Math. Imaging Vision}, 35(1):86--102, 2009.

\bibitem{kovar_flexible_2003}
L.~Kovar and M.~Gleicher.
\newblock Flexible automatic motion blending with registration curves.
\newblock In {\em Proceedings of the 2003 {ACM} {SIGGRAPH}/Eurographics
  Symposium on Computer Animation}, {SCA} '03, pages 214--224, Aire-la-Ville,
  Switzerland, Switzerland, 2003. Eurographics Association.

\bibitem{kovar_automated_2004}
L.~Kovar and M.~Gleicher.
\newblock Automated extraction and parameterization of motions in large data
  sets.
\newblock In {\em {ACM} Transactions on Graphics ({TOG})}, volume~23, pages
  559--568. {ACM}, 2004.

\bibitem{kurtek_landmark-guided_2013}
S.~Kurtek, A.~Srivastava, E.~Klassen, and H.~Laga.
\newblock Landmark-guided elastic shape analysis of spherically-parameterized
  surfaces.
\newblock {\em Computer Graphics Forum}, 32(2pt4):429--438, 2013.

\bibitem{LKS2014}
H.~Laga, S.~Kurtek, A.~Srivastava, and S.~J. Miklavcic.
\newblock Landmark-free statistical analysis of the shape of plant leaves.
\newblock {\em J. Theoret. Biol.}, 363:41--52, 2014.

\bibitem{Mennucci2008}
A.~Mennucci, A.~Yezzi, and G.~Sundaramoorthi.
\newblock {Properties of {S}obolev-type metrics in the space of curves}.
\newblock {\em Interfaces Free Bound.}, 10(4):423--445, 2008.

\bibitem{Michor2007}
P.~W. Michor and D.~Mumford.
\newblock {An overview of the {R}iemannian metrics on spaces of curves using
  the {H}amiltonian approach}.
\newblock {\em Appl. Comput. Harmon. Anal.}, 23(1):74--113, 2007.

\bibitem{Mio2004}
W.~Mio and A.~Srivastava.
\newblock {Elastic-string models for representation and analysis of planar
  shapes}.
\newblock In {\em {Computer Vision and Pattern Recognition, 2004. CVPR 2004.
  Proceedings of the 2004 IEEE Computer Society Conference on}}, volume~2,
  pages II--10--II--15 Vol.2, 2004.

\bibitem{Mio2007}
W.~Mio, A.~Srivastava, and S.~Joshi.
\newblock {On Shape of Plane Elastic Curves}.
\newblock {\em Int. J. Comput. Vision}, 73(3):307--324, July 2007.

\bibitem{pejsa_state_2010}
T.~Pejsa and I.~Pandzic.
\newblock State of the art in example-based motion synthesis for virtual
  characters in interactive applications.
\newblock {\em Computer Graphics Forum}, 29(1):202--226, 2010.

\bibitem{Samir2012}
C.~Samir, P.-A. Absil, A.~Srivastava, and E.~Klassen.
\newblock {A gradient-descent method for curve fitting on {R}iemannian
  manifolds}.
\newblock {\em Found. Comput. Math.}, 12(1):49--73, 2012.

\bibitem{Shah2008}
J.~Shah.
\newblock {{$H\sp 0$}-type {R}iemannian metrics on the space of planar curves}.
\newblock {\em Quart. Appl. Math.}, 66(1):123--137, 2008.

\bibitem{Shah2013}
J.~Shah.
\newblock An {$H^2$} {R}iemannian metric on the space of planar curves modulo
  similitudes.
\newblock {\em Adv. in Appl. Math.}, 51(4):483--506, 2013.

\bibitem{Mumford2006}
E.~Sharon and D.~Mumford.
\newblock {2D-Shape Analysis Using Conformal Mapping}.
\newblock {\em International Journal of Computer Vision}, 70:55--75, 2006.

\bibitem{Jermyn2011}
A.~Srivastava, E.~Klassen, S.~Joshi, and I.~Jermyn.
\newblock {Shape Analysis of Elastic Curves in Euclidean Spaces}.
\newblock {\em Pattern Analysis and Machine Intelligence, IEEE Transactions
  on}, 33(7):1415--1428, 2011.

\bibitem{SKKS2014}
J.~Su, S.~Kurtek, E.~Klassen, and A.~Srivastava.
\newblock Statistical analysis of trajectories on {R}iemannian manifolds: bird
  migration, hurricane tracking and video surveillance.
\newblock {\em Ann. Appl. Stat.}, 8(1):530--552, 2014.

\bibitem{wei_liu_riemannian_2011}
{Wei Liu}.
\newblock {\em A Riemannian Framework For Annotated Curves Analysis}.
\newblock PhD thesis, The Florida State University, 2011.

\bibitem{XKS2014}
Q.~Xie, S.~Kurtek, and A.~Srivastava.
\newblock Analysis of {A}neu{R}isk65 data: {E}lastic shape registration of
  curves.
\newblock {\em Electron. J. Stat.}, 8:1920--1929, 2014.

\bibitem{Younes1998}
L.~Younes.
\newblock {Computable elastic distances between shapes}.
\newblock {\em SIAM J. Appl. Math.}, 58(2):565--586 (electronic), 1998.

\bibitem{Michor2008a}
L.~Younes, P.~W. Michor, J.~Shah, and D.~Mumford.
\newblock {A metric on shape space with explicit geodesics}.
\newblock {\em Atti Accad. Naz. Lincei Cl. Sci. Fis. Mat. Natur. Rend. Lincei
  (9) Mat. Appl.}, 19(1):25--57, 2008.

\end{thebibliography}

\end{document}